\newcommand{\R}{\mathbb{R}}
\newcommand{\prob}{\mathbb{P}}
\newcommand{\frmu}{\mu}
\DeclareMathOperator{\frvar}{V_f}
\newcommand{\expect}{\mathbb{E}}
\newcommand{\dzvector}[1]{\mathbf{\boldsymbol{#1}}}
\renewcommand{\vec}[1]{\dzvector{#1}}
\newcommand{\mc}[1]{\mathcal{#1}}
\DeclareMathOperator{\map}{\phi}
\DeclareMathOperator*{\argmin}{arg\,min}
\DeclareMathOperator*{\argmax}{arg\,max}
\DeclareMathOperator{\graphdist}{\delta}
\DeclareMathOperator{\mahal}{d}
\newcommand{\norm}[2]{\left|{#1}\right|_{#2}}
\DeclareMathOperator{\dom}{Dom}
\DeclareMathOperator{\supp}{Supp}
\DeclareMathOperator{\test}{Test}
\newcommand{\pval}{p_{\rm val}}
\newcommand{\mucpm}{$\mu$-CPM}
\newcommand{\ecpm}{$\mc E$-CPM}
\newcommand{\ediv}{E-div}
\theoremstyle{plain}
\newtheorem{lemma}{Lemma}
\newtheorem{proposition}{Proposition}
\definecolor{darkgreen}{RGB}{30,150,60}
\begin{document}

\title{Change Point Methods on a Sequence of Graphs}

\author{Daniele Zambon,~\IEEEmembership{Student Member,~IEEE},
        Cesare Alippi,~\IEEEmembership{Fellow,~IEEE},
        and Lorenzo Livi,~\IEEEmembership{Member,~IEEE}
\thanks{Manuscript received ; revised .}
\thanks{Daniele Zambon is with the Faculty of Informatics, 
Universit\`{a} della Svizzera italiana, 
Lugano, Switzerland 
(e-mail: daniele.zambon@usi.ch).}
\thanks{Cesare Alippi is with the Dept. of Electronics, Information, and Bioengineering, 
Politecnico di Milano, 
Milan, Italy and 
Faculty of Informatics, 
Universit\`{a} della Svizzera italiana, 
Lugano, Switzerland 
(e-mail: cesare.alippi@polimi.it, cesare.alippi@usi.ch)}
\thanks{Lorenzo Livi is with the Department of Computer Science, College of Engineering, Mathematics and Physical Sciences, 
University of Exeter, 
Exeter EX4 4QF, United Kingdom 
(e-mail: l.livi@exeter.ac.uk).}
}

\maketitle

\begin{abstract}
Given a finite sequence of graphs, e.g., coming from technological, biological, and social networks, the paper proposes a methodology to identify possible changes in stationarity in the stochastic process generating the graphs.
In order to cover a large class of applications, we consider the general family of attributed graphs where both topology (number of vertexes and edge configuration) and related attributes are allowed to change also in the stationary case.
Novel Change Point Methods (CPMs) are proposed, that (i) map graphs into a vector domain; (ii) apply a suitable statistical test in the vector space; (iii) detect the change --if any-- according to a confidence level and provide an estimate for its time occurrence.
Two specific multivariate CPMs have been designed: one that detects shifts in the distribution mean, the other addressing generic changes affecting the distribution.
We ground our proposal with theoretical results showing how to relate the inference attained in the numerical vector space to the graph domain, and vice versa. We also show how to extend the methodology for handling multiple change points in the same sequence.
Finally, the proposed CPMs have been validated on real data sets coming from epileptic-seizure detection problems and on labeled data sets for graph classification. Results show the effectiveness of what proposed in relevant application scenarios.
\end{abstract}

\section{Introduction}
\label{sec:intro}

A graph representation for data is appropriate in several fields, including physics, chemistry, neuroscience, and sociology \cite{newman2010networks}, where the phenomena under investigations can be observed as a sequence of measurements whose pairwise relationships are relevant too and thus included in the data representation \cite{li2017fundamental}.
In these application scenarios, the identification of a possible change in the system behavior, a situation associated with anomalies or events to be detected in the sequence, is of particular interest; 
examples of applications that can be cast in this framework are functional brain networks \cite{KHAMBHATI20161170} and power grids \cite{powerlosses1}. Further relevant applications cover data acquired from cyber-physical systems and the Internet of Things \cite{alippi2017not}.

In all above applications the topology and the number of vertexes at different time steps may vary, and attributes can be associated with both vertexes and edges; moreover, attributes are not limited to numeric ones and may include categorical data, strings and even a mix of multiple types. In order to cover all these scenarios, we formalize graphs as objects of a graph domain $\mc G$ belonging to the family of graph alignment spaces (GASs) \cite{jain2016geometry}.
GASs provide a metric structure which is also capable to deal with isomorphic graphs, i.e., they account for the case where a one-to-one correspondence among vertexes of different graphs is unavailable or missing.

An illustrative example of graph sequence is provided by Figure \ref{fig:illustrative-ex}, where the graphs resemble the `A' character in the first part of the sequence, until a certain time step $t^*$ after which the graph changes to a new one, resembling the `E' character. 
%
%
\begin{figure}
\def\svgwidth{\columnwidth}
{\footnotesize

\begingroup%
  \makeatletter%
  \providecommand\color[2][]{%
    \errmessage{(Inkscape) Color is used for the text in Inkscape, but the package 'color.sty' is not loaded}%
    \renewcommand\color[2][]{}%
  }%
  \providecommand\transparent[1]{%
    \errmessage{(Inkscape) Transparency is used (non-zero) for the text in Inkscape, but the package 'transparent.sty' is not loaded}%
    \renewcommand\transparent[1]{}%
  }%
  \providecommand\rotatebox[2]{#2}%
  \newcommand*\fsize{\dimexpr\f@size pt\relax}%
  \newcommand*\lineheight[1]{\fontsize{\fsize}{#1\fsize}\selectfont}%
  \ifx\svgwidth\undefined%
    \setlength{\unitlength}{374.17322835bp}%
    \ifx\svgscale\undefined%
      \relax%
    \else%
      \setlength{\unitlength}{\unitlength * \real{\svgscale}}%
    \fi%
  \else%
    \setlength{\unitlength}{\svgwidth}%
  \fi%
  \global\let\svgwidth\undefined%
  \global\let\svgscale\undefined%
  \makeatother%
  \begin{picture}(1,0.12878788)%
    \lineheight{1}%
    \setlength\tabcolsep{0pt}%
    \put(0,0){\includegraphics[width=\unitlength,page=1]{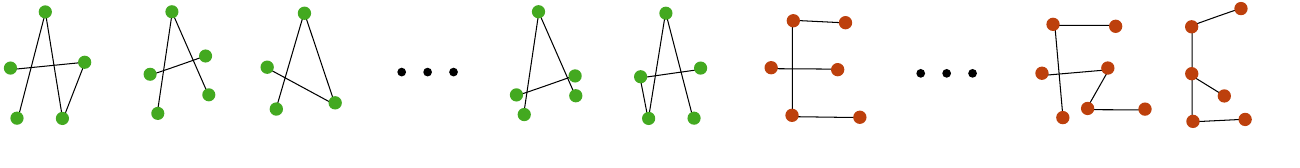}}%
    \put(0.11972644,0.00535786){\color[rgb]{0,0,0}\makebox(0,0)[lt]{\lineheight{0}\smash{\begin{tabular}[t]{l}$g_2$\end{tabular}}}}%
    \put(0.21726725,0.00560842){\color[rgb]{0,0,0}\makebox(0,0)[lt]{\lineheight{0}\smash{\begin{tabular}[t]{l}$g_3$\end{tabular}}}}%
    \put(0.81165359,0.00529966){\color[rgb]{0,0,0}\makebox(0,0)[lt]{\lineheight{0}\smash{\begin{tabular}[t]{l}$g_{T-1}$\end{tabular}}}}%
    \put(0.91758082,0.00565383){\color[rgb]{0,0,0}\makebox(0,0)[lt]{\lineheight{0}\smash{\begin{tabular}[t]{l}$g_{T}$\end{tabular}}}}%
    \put(0.61416439,0.00494523){\color[rgb]{0,0,0}\makebox(0,0)[lt]{\lineheight{0}\smash{\begin{tabular}[t]{l}$g_{t^*}$\end{tabular}}}}%
    \put(0.49981181,0.00512245){\color[rgb]{0,0,0}\makebox(0,0)[lt]{\lineheight{0}\smash{\begin{tabular}[t]{l}$g_{t^*-1}$\end{tabular}}}}%
    \put(0.02212541,0.00535786){\color[rgb]{0,0,0}\makebox(0,0)[lt]{\lineheight{0}\smash{\begin{tabular}[t]{l}$g_1$\end{tabular}}}}%
    \put(0.39369203,0.0048719){\color[rgb]{0,0,0}\makebox(0,0)[lt]{\lineheight{0}\smash{\begin{tabular}[t]{l}$g_{t^*-2}$\end{tabular}}}}%
  \end{picture}%
\endgroup%

\caption{A sequence of $T$ geometric graphs. At each time step $t=1,\dots, T$ a different graph is produced as a measurement of an observed system. Graphs for $t< t^*$ are representative of the normal operating conditions of the observed system (associated with character `A'), whereas for $t\ge t^*$ they represent a change in the system's behavior (associated with character `E'). Notice that observed graphs are not assumed to have a constant structure in either regimes.}
}
\label{fig:illustrative-ex}
\end{figure}

\subsection{Problem formulation}

Consider an unknown stochastic process $\mc P$ that generates a finite sequence of attributed graphs, $\vec g(1,T)=\{g_1, ... , g_T\}$, $g_t\in \mc G$, where $\mc G$ is a GAS.

Each graph of the sequence is interpreted as the realization of a graph-valued random variable.
Under the stationarity hypothesis for process $\mc P$, the $T$ graphs $g_t$, $t=1,2,\dots,T$ are independent and identically distributed%
\footnote{
    Two graphs $g_i,g_j$ are identically distributed if $\prob(g_i\in A)=\prob(g_j\in A)$ for any set $A \subseteq \mc G$; they are independent if $\prob(\{g_i\in A_1\}\cap \{g_j\in A_2\})=\prob(g_i\in A_1)\prob(g_j\in A_2)$, for any pair $A_1,A_2\subseteq\mc G$.
  }
(i.i.d.) according to an unknown distribution $Q_0$ \cite{zambon2017concept}.
Even when the distribution $Q_0$ is stationary, the graphs of the sequence are allowed to vary (in terms of both structure and vertex/edge attributes) from a time step to another; this is, in fact, the random component associated with $Q_0$, which can be seen as the generator of the ``normal operating conditions'' of the monitored system.
Conversely, we say that process $\mc P$ undergoes a change in stationarity if it exists a time $t^*$ such that
\begin{equation}
\label{eq:change-in-stationarity}
\begin{cases}
g_t \sim Q_0 & t  <   t^*\\
g_t \sim Q_1 & t \geq t^*,
\end{cases}
\end{equation}
with $Q_1$ being a graph distribution different from $Q_0$; time $t^*$ is said to be the change point. 
The type of change is abrupt when $\mc P$ commutes from $Q_0$ to $Q_1$ in a single time-step.

As finite sequence $\vec g(1,T)$ is given, to address the detection of a single change in stationarity, we propose to adopt a Change Point Method (CPM) \cite{cabrieto2018testing, hawkins2003changepoint, fan2015identifying, harchaoui2013kernel}, which relies on a series of two-sample statistical tests
$$[s,\pval]=\test(\vec g(1,t-1),\vec g(t,T))$$ 
applied to the $T-1$ pairs of subsets $\vec g(1,t-1)=\{g_1,\dots,g_{t-1}\}$ and $\vec g(t,T)=\{g_{t},\dots,g_{T}\}$, $t=2,\dots,T$, and returns a statistic $s$ together with an associated $p$-value $\pval$.
If at least a test yields a $p$-value lower than a significance level $\alpha$, then a change in stationarity is detected in the sequence $\vec g(1,T)$ and the estimated change point is the one with the lowest $p$-value.
It is important to notice that the CPM framework can be implemented, in principle, by using any two-sample statistical test designed to assess differences between distribution functions.

Not rarely, the driving process $\mc P$ undergoes multiple abrupt changes in the same finite sequence. In these cases, the proposed methodology can be extended by following the E-divisive technique \cite{matteson2014nonparametric}. Whenever the stationarity hypothesis is met for each sub-sequence in between the change points, the methodology addresses the estimation of both the number of change points and their location in the sequence.

\subsection{Contribution and paper organization}

The novelty of our contribution can be summarized as follows:
\begin{itemize}
    \item 
    A methodology to perform change-point analysis on a sequence of attributed graphs by relying on graph embedding. We propose to map each observed graph belonging to a GAS onto a $d$-dimensional point, $d\geq1$, in some vector space where multivariate nonparametric%
	\footnote{Throughout the present paper, the term nonparametric test indicate a test which does not assume a predefined model distribution.}
	statistical hypothesis testing can be applied.

    \item 
    Theoretical developments that allow to control the confidence level of the inference in the graph domain and the vector embedding space. 
    In fact, in Proposition~\ref{prop:fabs-sg-se} we prove that the statistical confidence of the test attained in the embedding space is related to the confidence level that a change has taken place in the graph domain. 
    Our theoretical results show how this relation, in principle, holds true for any graph embedding method.
    
    \item
    We propose two different CPM tests for graphs.
	The first one addresses the common type of change that involves a shift in the mean of the graph distribution; this nontrivially extends the CDT in \cite{zambon2017concept} to the off-line case, and significantly improves the theoretical framework delineated there by removing the strong bi-Lipschitz assumption for the embedding map.
    The second test aims at identifying any kind of change in distribution (Proposition \ref{prop:energy-metric-GAS}) by relying on the energy distance between probability distributions introduced by Szekely \textit{et al.} \cite{szekely2013energy}, which also ensures the consistency of the test.   
    \item 
    We propose an extension of the E-divisive approach \cite{matteson2014nonparametric} in order to identify multiple change points in a graph sequence; the approach is able to estimate both the number of change points and their position in the sequence.

\end{itemize}

The remainder of the paper is organized as follows.
Section \ref{sec:related_works} describes related works.
Preliminary definitions and assumptions are discussed in Section \ref{sec:def-ass}.
Section \ref{sec:methodology} introduces the proposed methodology for performing CPMs on graph sequences.
Section \ref{sec:theory} presents the theoretical results related to the proposed CPMs.
Section \ref{sec:multi-change} shows how to extend the proposed methodology for the identification of multiple changes; related theoretical results are presented in the same section.
To demonstrate the practical usefulness of what proposed, in Section \ref{sec:experiments} we perform simulations on both synthetic data and several real-world data sets of graphs. In particular, we take into account also a relevant real application scenario involving the detection of the onset of epileptic seizures in functional brain connectivity networks.
Section \ref{sec:conclusion} concludes the paper and provides pointers to future research.
Finally, proofs of all theoretical results are provided in Appendix~\ref{sec:proofs}.

\section{Related works}
\label{sec:related_works}

In the literature, CPMs have been initially applied to scalar, normally distributed sequences to monitor shifts in the mean \cite{hawkins2003changepoint} or variance \cite{hawkins2005change}. Extensions have been introduced for nonparametric inference \cite{hawkins2010nonparametric, ross2012two}, multivariate data \cite{zamba2006multivariate,doi:10.1080/02664763.2013.800471,shi2017consistent}, kernel-based inference \cite{li2015m,NIPS2008_3556,NIPS2012_4727}. 
The design of CPMs for sequences of graphs, instead, is still a significantly underdeveloped research area.
The problem of detecting multiple changes, instead, is more challenging \cite{jandhyala2013inference}. One reason lies in the fact that the number of change points is often unknown; furthermore, the identification of one change point may rely on the the identification of the others. To address this problem, it is possible to optimize an objective function for identifying the location of the change and consider a penalty term that takes into account the number of change points \cite{7938741}.
A second direction in the literature aims at tackling the problem in a incremental way, by recursively splitting the original sequence in two parts \cite{matteson2014nonparametric}.

Considering graph sequences, we report the recent contribution in \cite{barnett2016change}, where the authors provide a method to monitor functional magnetic resonance recordings to identify changes. The recordings are modeled by correlation networks and a CPM is applied to detect changes in stationarity.
The technique there proposed, however, is designed for graphs of fixed size with numerical weights associated with edges.
Few other works address changes in sequences of graphs/networks \cite{peel2015detecting,wilson2016modeling}, but none of them operates on the very generic family of attributed graphs considered here.

Some of the already proposed CPMs can be applied to more general input spaces and hence are constrained to operate in vector spaces.
For example, \cite{matteson2014nonparametric, li2015m} rely on a distance or a kernel function, which makes them in principle applicable to graphs. 
However, the theoretical results related to testing for any distribution change in the graph sequence applies when the graph distance is metric of strong negative type \cite{lyons2013distance} and the graph kernel is universal \cite{smola2007hilbert}.

Finally, we note that another relevant type of statistical test is known as Change Detection Test (CDT) \cite{basseville1993detection}. A CDT acts differently from a CPM as it is designed for sequential monitoring.

\section{Background and assumptions}
\label{sec:def-ass}

\subsection{Attributed graphs and the graph alignment space}
\label{sec:def-gas}

An attributed graph $g$ is defined as a triple $(V,E,a)$, where $V$ is a set of vertexes, $E\subseteq V\times V$ is a set of edges and $a:V\cup E\rightarrow \mc A$ is a labeling function that associates to each vertex and edge an attribute taken from a predefined set $\mc A$. 
We denote with $\mc G[\mc A, N_{\max}]$, or simply $\mc G$, the set of all graphs with attributes in $\mc A$ and with at most $N_{\max}<\infty$ vertexes. Notice that $N_{\max}$ can be taken arbitrarily large, but needs to be finite.
Simple examples of attribute sets are $\mc A = [-1,1]$ as in the case of correlation graphs and $\mc A=\R_+$ in the case of transport networks. Set $\mc A$ can be more complex and include vectors, categorical data and strings. Moreover, even combination of multiple types are possible, like in the case of chemical compounds where, e.g., $\mc A$ can contain all chemical elements and possible numbers of valence electrons involved in a bond.

In general, sets $\mc G$ and $\mc A$ are not vector spaces. Hence, operations between graphs, such as computing distances between pairs of graphs $g_i,g_j\in\mc G$, are not trivial \cite{gm_survey,emmert2016fifty}.
In this work, we consider a graph alignment metric (GAM), $\graphdist:\mc G\times \mc G\rightarrow \R_+$ \cite{jain2016geometry} to evaluate the distance between two graphs. A GAM is defined by means of an attribute kernel $k_a:\mc A\times \mc A \rightarrow \R$, which assesses the similarity between attributes as an inner product in an implicit Hilbert space%
    \footnote{Onto which the kernel trick is applied.} 
$\mc H$, and a partial function%
    \footnote{While a function $f: A\rightarrow B$ associates an element $b\in B$ to \emph{every} element $a\in A$, a partial function $pf:A\rightarrow B$ is not necessarily defined for every element $a\in A$, hence is a (proper) function only from a subset $A'\subseteq A$ to set $B$.} 
$\pi:V_i\rightarrow V_j$ called alignment, which associates the vertexes $V_i$ of $g_i$ with vertexes $V_j$ of $g_j$.
A GAM is hence defined as:
$$
\graphdist(g_i,g_j):= \left[ \kappa_{\pi^*}(g_i,g_i)+\kappa_{\pi^*}(g_j,g_j)-2\kappa_{\pi^*}(g_i,g_j)\right]^{1/2},
$$
where
$$
\begin{aligned}
\kappa_{\pi}(g_i,g_j) = 
\sum_{v,v'\in\dom(\pi)} k_a(a((v,v')), a( (\pi(v),\pi(v')) ) ),
\end{aligned}
$$
$\dom(\pi)$ indicates the domain of $\pi$, and
$$
\pi^* =\argmax_{\pi \in \{V_i\rightarrow V_j\}} k_\pi(g_i,g_j),
$$
is the optimal alignment. In order to simplify the notation, we assume no self loops, $(v,v)\not \in E$, and that $a((v,v))=a(v)$ for any vertex $v$. 
The set $\mc G$ equipped with $\graphdist(\cdot,\cdot)$ is called a \emph{graph alignment space} (GAS).
Under the mild assumptions that the attribute kernel is positive semidefinite, $k_a(x,y)\geq 0$ for any $x,y\in\mc A$ and $\mc H = \R^n$ for some $n\in\mathbb{N}$, $(\mc G, \graphdist)$ can be shown to be a metric space \cite[Theo. 4.7,5.2]{jain2016geometry}. This is a sufficient condition and corresponds to Assumption (A1) made in Section~\ref{sec:assumptions}.

We note that several common graph spaces are GASs, e.g., the spaces of weighted graphs equipped with Frobenius norm as distance, and the numeric vector-attributed graph spaces whose distance is based on Euclidean attribute kernels.

\subsection{Graph mean and variation}
\label{sec:def-graph-mean}

In metric --not vector-- spaces the notions of mean and variance have to be adapted, due to the possibly ill-defined operation of summation. This issue can be addressed by considering the formulation given by Fr\'echet \cite{frechet1948elements}, and also adopted in \cite{jiang2001median,jain2016statistical}.
Given a metric GAS $(\mc G,\graphdist)$, consider a graph-valued random variable $g$ taking values in $\mc G$ and distributed according to probability function $Q$. Assume then a sample $\vec g(1,n) =\{g_1,\dots,g_n\}$, with $g_i$, $i=1,\dots,n$ being independent realizations of $g$.

The concepts of graph mean and graph variation (extension of the variance concept) are formalized by the Fr\'echet function $\mc F_Q:\mc G\rightarrow \R_+$ defined for any $g_0\in\mc G$ as
$$
\mc F_Q(g_0) = \int_{\mc G} \graphdist(g_0,g)^2 dQ(g) = \expect_{g\sim Q}[\graphdist(g,g_0)^2].
$$
Function $\mc F_Q(\cdot)$ is positive and, if $\mc F_Q(g_0)$ is finite at some graph $g_0$, then it is finite for any other graph.
Function $\mc F_Q(\cdot)$ is sometimes termed Fr\'echet population function to be distinguished by its
empirical counterpart: the Fr\'echet sample function $\mc F_{\vec g(1,n)}(\cdot)$, which is computed over sample $\vec g(1,n)$,  
$$
\mc F_{\vec g(1,n)}(g_0) = \frac{1}{n}\sum_{i=1}^n \graphdist(g_0,g_i)^2.
$$
The Fr\'echet sample variation is defined as the infimum
$$
\frvar[\vec g(1,n)] =\inf_{g_0\in\mc G}\mc F_{\vec g(1,n)}(g_0).
$$
Such infimum is attained in a finite set of graphs $\frmu_{\vec g(1,n)}\subseteq \mc G$ \cite[Prop. 3.2]{jain2016statistical}, called sample Fr\'echet mean graphs.
Accordingly, the Fr\'echet (population) variation is defined as
$$
\frvar[Q] = \inf_{g_0\in\mc G}\mc F_{Q}(g_0).
$$
A graph attaining the infimum is called Fr\'echet mean graph, and exists whenever $\mc G$ is a complete metric space \cite[Theorem 3.3]{bhattacharya2012nonparametric}; accordingly, $\frmu_{Q}$ denotes the set of all Fr\'echet mean graphs.
Notice that, whenever a Euclidean space $(\mc X,\norm{\cdot}{2})$ is taken into account, the infimum of the Fr\'echet population and sample functions corresponds to the classical expected value $\expect_{x\sim P}[x]=\int_{\mc X}x'\,dP(x')$ and the arithmetic mean $\bar x=n^{-1}\sum_i x_i$, respectively, where $P$ is a probability function on $\mc X$ and $\{x_i\}$ are drawn i.i.d.\ from $P$.

Verifying the uniqueness of the Fr\'echet mean graph in both population and sample cases is more involving; here we limit to a brief discussion and refer to \cite{jain2016statistical} for details. 
A sufficient condition requires that the support of $Q$ is bounded in a ball [Assumption (A2), Sec.~\ref{sec:assumptions}]. Such a ball has to be centered on a graph $g^*$, and has a radius proportional to the degree of asymmetry, $\chi(g^*)$, defined as 
$$
\chi(g):= \sqrt{2}\left[k_{\rm id}(g,g) - \min_{\pi \in \{V\rightarrow V\} } k_\pi(g,g)\right]^{1/2},
$$
with ${\rm id}$ being the identity alignment so that ${\rm id}(v)=v$, $\forall v\in V$.
Graphs with a non-null degree of asymmetry --namely, asymmetric or ordinary graphs-- are spread over the entire graph space $\mc G$ \cite[Cor. 4.19]{jain2016statistical}, and their degree depends on the particular location.
Here, we consider a ball, however, this can be extended to a cone surrounding that ball \cite{jain2016statistical}.
As final remark, we comment that each graph in $g\in\mc G$ can be represented by means of a matrix $M\in \mc A^{N_{\max}\times N_{\max}}$ and $M_{\mc H}\in \mc H^{N_{\max}\times N_{\max}}$, relying on the kernel embedding. It then follows that the Fr\'echet mean exists unique in $\mc H^{N_{\max}\times N_{\max}}$.

In the rest of the paper, when the mean is assumed to exist and be unique, with little abuse of notation we denote $\frmu_{Q}$ as a graph instead of a singleton set.

\subsection{Assumptions}
\label{sec:assumptions}

The current section presents and comments on the assumptions we make throughout the paper. 
As mentioned in the Introduction, here we consider graphs belonging to a graph alignment space $(\mc G, \graphdist)$.
In order to ensure that $\mc G$ has a metric structure, we assume that:
\begin{itemize}
  \item[(A1)] 
  GAM $\graphdist(\cdot,\cdot)$ is built on an positive semi-definite attribute kernel $k_a(x,y)\geq 0$ for any $x,y\in\mc A$ and $\mc H = \R^n$ for some $n\in\mathbb{N}$.
\end{itemize}
Assumption (A1) is mild, yet grants space $(\mc G, \graphdist)$ to be metric, as mentioned in Section~\ref{sec:def-graph-mean}.
The second assumption we make concerns the probability distribution over the graph domain. In particular, we bound the support of the distributions $\{Q_i\}_{i=0}^k$ as follows: 
\begin{itemize}
  \item[(A2)] 
  The Fr\'echet variation $\frvar[Q_i]$ is finite, $i=0,\dots,k$, and there exists a sufficiently asymmetric graph $g^*$ so that $\cup_{i=0}^k\supp(Q_i)$ is bounded by a ball centered in $g^*$ and with radius proportional to $\chi(g^*)$, as requested in \cite{jain2016statistical}.
\end{itemize}
Assumption (A2) grants that the Fr\'echet mean exists unique (see Section~\ref{sec:def-graph-mean}), hence making the mathematics more amenable. 
At the same time, this hypothesis enables Theorem 4.23 in \cite{jain2016geometry} and, accordingly, the ball of graphs is proven isometric to an Euclidean space.
We comment that, as mentioned in Section~\ref{sec:def-graph-mean}, a given GAS is entirely covered by such balls and, moreover, this assumption can be relaxed.

\section{CPMs on a graph sequence}
\label{sec:methodology}

Under the stationarity hypothesis for process $\mc P$, sequence $\vec g(1,T)$ is composed of i.i.d.\ graphs distributed according to the stationary probability function $Q_0$.
Following \eqref{eq:change-in-stationarity}, the statistical hypothesis test for detecting a single change in stationarity can be formulated as
\begin{equation}
\label{eq:cpm-hp-test}
\begin{array}{ll}
H_0:& g_{t} \sim Q_0 ~ \forall t\in\{0,\dots,T\}\\
H_1:& \exists ! \, t^*\in\{2, \dots, T\} ~ \text{s.t.} 
\begin{cases}
  g_t \sim Q_0,& t<t^* \\
  g_t \sim Q_1\ne Q_0,& t\geq t^* 
\end{cases}
\end{array}
\end{equation}

We consider a map $\map:\mc G\rightarrow \R^d$ between the graph domain and the Euclidean space, which associates graph $g\in\mc G$ with point $x=\map(g)\in\R^d, d\geq1$. 
The methodology proposed here, and the related results shown in Section~\ref{sec:theory}, are valid for any embedding function $\map(\cdot)$ that the user chooses, thus making the proposed methodology very general. 
However, in practical applications of our methodology, the distortion introduced by $\map(\cdot)$ plays a relevant role that should be taken into account regardless of the validity of the theoretical results.

By applying the mapping to each graph of the sequence $\vec g(1,T)$, we generate a new transformed sequence $\vec x(1,T)=\{x_1,\dots,x_T\}$ of vectors $x_t=\map(g_t)$, $t=1,\dots,T$.
A multivariate CPM test can then be applied on $\vec x(1,T)$.
In CPMs, one performs multiple two-sample tests.
In particular, for each time index $t=2,\dots,T$, a statistic
$$[\,s_e(t) ,\, \pval(t)\,] =\test(\vec x(1,t-1), \vec x(t,T))$$
is computed on sequences $\vec x(1,t-1), \vec x(t,T)$. 
Notice that, in order to improve readability, and whenever it is clear from the context, we may write $\vec g, \vec x$ instead of $\vec g(1,T), \vec x(1,T)$, and $s_e(t),\pval(t)$ instead of $s_e(t;\vec x(1,T)),\pval(t;\vec x(1,T))$.

Statistical test $\test(\cdot,\cdot)$ depends on the detection problem at hand; for instance, one could design specific tests to identify a change in the mean, or in the variance. Two relevant examples, one addressing changes in the distribution mean, and the other generic changes in the distribution, are discussed later in Sections~\ref{sec:clt} and \ref{sec:energy}, respectively, and Figure \ref{fig:cpm-example} provides a visual description of their application.
The following pseudo-code outlines the CPM for a generic two-sample test.
\begin{algorithmic}[1]
\renewcommand{\algorithmicrequire}{\textbf{Input:}}
\renewcommand{\algorithmicensure}{\textbf{Output:}}
\Require A sequence of observed 
vectors $\vec x(1,T)$; a significance level $\alpha_e$.
\Ensure Whether a change has been detected or not and the estimated change point $\hat t$.
\ForAll{$t=2,\dots,T$} \label{line:for-loop}
  \State Compute $p$-value $\pval(t)$ of $\test(\vec x(1,t-1), \vec x(t,T))$;
\EndFor
\State $\hat t \leftarrow \argmin_t \pval(t)$; \label{line:t-hat}
\If {$\pval\left(\hat t\right) < \alpha_e$}\label{line:test}
  \State Null hypothesis $H_0$ is rejected;
  \State \Return Change detected at time $\hat t$.
\Else
  \State Null hypothesis $H_0$ is not rejected;
  \State \Return No change detected.
\EndIf
\end{algorithmic}

The for-loop in Line~\ref{line:for-loop} explores all possible subdivision of the sequence. Line~\ref{line:t-hat} estimates the candidate change point $\hat t$, and considers graph $g_{\hat t}$ to be the first one drawn by $Q_1$.
Line~\ref{line:test} checks the actual presence of a change; in most cases, the rejection criterion can also be implemented by monitoring the statistic $s_e(\hat t)$, instead of $\pval(\hat t)$, e.g.,
\begin{equation}
\label{eq:cpm-test-rule-emb}
\text{if } s_e(\hat t)>\gamma_e(\hat t) \ \Rightarrow \ \text{reject } H_0 \text{ at significance level } \alpha_e,
\end{equation}
provided that $\gamma_e(\hat t)$ is the quantile of order $1-\alpha_e$ associated with $s_e(\hat t)$.
The significance level 
$$
\alpha_e = \prob(\text{reject } H_0 | H_0)
$$
coincides with the tolerated rate of detected changes (false positive rate) when the null hypothesis $H_0$ holds true: the proposed methodology allows the designer to define the significance level $\alpha_e$ according to the application at hand.
Conversely, the rate of unrecognized changes (false negative rate),
$$
\beta_e = \prob( \text{do not reject } H_0 | H_1)
$$
defines the power (i.e., $1-\beta_e$) of the test. Parameter $\beta_e$ is characteristic of the adopted test $\test(\cdot,\cdot)$ and depends on the family of possible distributions $Q_1$. While $\alpha_e$ can be obtained in non-parametric tests, the value of $\beta_e$ is often unavailable.
Given a significance level $\alpha_e$, the designer can improve the power $1-\beta_e$ by selecting a suitable test;
in fact, the identification of a particular type of change, e.g., a change in the distribution variance, is better addressed when a test specifically designed for that change is adopted.

If a change is detected in sequence $\vec x(1,T)$ with significance level $\alpha_e$, then we say that a change has taken place also in the graph sequence $\vec g(1,T)$ at time $\hat t$. However, the significance level $\alpha_g$ of the inference in the graph domain differs, a priori, from $\alpha_e$ in $\R^d$. Proposition~\ref{prop:fabs-sg-se} shows how the significance levels are related and, accordingly, how a change in the embedding space implies a change in the graph domain and vice-versa.

We point out that some two-sample tests require a minimum sample size, e.g., the Welch's t-test. In those cases, one can add a margin $m>1$ and apply the procedure for $t=m+1,T-m+1$. Considering a margin $m$ is useful also to reduce the false negative rate $\beta_e$; in fact, $\beta_e$ often approaches $1$ when the test \eqref{eq:cpm-test-rule-emb} is applied with $\hat t$ very close to $2$ or $T$, i.e., when the sample size of one of the two subsets is small.

\section{Theoretical results}
\label{sec:theory}

Inferring whether a change in stationarity occurred or not in a sequence of attributed graphs, $\vec g(1,T)$, is a difficult problem. As we do not make assumptions about embedding map $\map(\cdot)$, the resulting sequence $\vec x(1,T)$ does not necessarily encode the same statistical properties of $\vec g(1,T)$.
Nonetheless, here we prove some general results connecting changes in stationarity occurring in the graph sequence $\vec g(1,T)$ with those detected in the embedded sequence $\vec x(1,T)$, and vice-versa.
In order to improve readability, technical details of the various proofs are delivered in Section~\ref{sec:proofs}.

The core of our argument is that, if statistic $s_e(t)=\test_e(\vec x(1,t-1), \vec x(t,T))$ is related to the chosen statistic $s_g(t)=\test_g(\vec g(1,t-1), \vec g(t,T))$ defined in the graph domain, then also their distributions must be related.
By proving this, we can claim that a change occurring in one space can be detected in the other space as well, possibly with different confidence levels.
We mention that, throughout the paper, subscripted `e' and `g' denote quantities associated with the embedding space and the graph domain, respectively.
The following Proposition~\ref{prop:fabs-sg-se} shows how to relate statistics $s_g(t)$ and $s_e(t)$ in probabilistic terms.
In particular, when decision rule \eqref{eq:cpm-test-rule-emb} is applied to $\vec x(1,T)$ with $s_e(t)$ at significance level $\alpha_e$, a decision rule of the type
\begin{equation}
\label{eq:cpm-test-rule-graphs}
\text{if } s_g(t)>\gamma_g(t) \ \Rightarrow \ \text{reject } H_0 \text{ at significance level } \alpha_g
\end{equation}
holds in the graph domain by considering statistic $s_g(t)$.
The significance level $\alpha_g$ for the test on graphs can be bound by means of two significance levels $\alpha_e'$ and $\alpha_e''$ related to the multivariate test in the embedding space.
Significance levels $\alpha_e'$ and $\alpha_e''$ are associated with two threshold $\gamma_e',\gamma_e''$, respectively, that allow the user to perform statistical inference in the embedding space, while controlling the significance level of the corresponding inference in graph domain.
\begin{proposition}
\label{prop:fabs-sg-se}
Consider a sequence $\vec g=\{g_1,\dots,g_T\}$ of graph-valued random variables that are i.i.d.\ according to probability function $Q_0=Q_1$, and assume that (A1), (A2) hold true. Let us define a sequence $\vec x=\map(\vec g):=\{\map(g_1),\dots,\map(g_T)\}$ of random vectors obtained from $\vec g$ through map $\map(\cdot)$.
Let $\Psi_g(\cdot)$ and $\Psi_e(\cdot)$ be the cumulative density functions of statistics $s_g(t; \vec g)$ and $s_e(t; \vec x)=s_e(t; \map(\vec g))$, respectively.
Chosen constants%
\footnote{Constants $\lambda$ and $q$ depend on distribution $Q_0$, but not from $\vec g$.}
$\lambda>0$ and $q\in(0, 1]$ satisfying
\begin{equation}
\label{eq:fabs-sg-se}
\quad\prob_{\vec g\sim Q_0^T}(|s_g(t; \vec g)-s_e(t; \map(\vec g))|\leq \lambda) \geq q,
\end{equation}
then, for any real value $\gamma$, we have that
\begin{equation}
\label{eq:cdf-bound}
q\Psi_e\left(\gamma-\lambda\right)
\leq 
\Psi_g(\gamma) 
\leq 
q^{-1}\Psi_e\left(\gamma+\lambda\right).
\end{equation}
\end{proposition}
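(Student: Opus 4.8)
The plan is to realise both statistics on the common probability space $(\mc G^T, Q_0^T)$ and to treat $S_g:=s_g(t;\vec g)$ and $S_e:=s_e(t;\map(\vec g))$ as two real-valued random variables on it. Setting $A:=\{\,|S_g-S_e|\le\lambda\,\}$, hypothesis \eqref{eq:fabs-sg-se} reads exactly $\prob(A)\ge q$, and the target \eqref{eq:cdf-bound} involves only the marginal laws $\Psi_g,\Psi_e$ of $S_g,S_e$. So the proposition is really a statement about two random variables that coincide up to $\lambda$ with probability at least $q$, and the graph/embedding interpretation plays no further role once this reduction is made.

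First I would convert the closeness event into inclusions among the level sets that define the two CDFs. On $A$ the triangle inequality gives $S_e\le S_g+\lambda$ and $S_g\le S_e+\lambda$, whence
\[
\{S_e\le\gamma-\lambda\}\cap A\ \subseteq\ \{S_g\le\gamma\}
\qquad\text{and}\qquad
\{S_g\le\gamma\}\cap A\ \subseteq\ \{S_e\le\gamma+\lambda\}.
\]
Taking probabilities yields the two one-sided joint bounds $\Psi_g(\gamma)\ge\prob(\{S_e\le\gamma-\lambda\}\cap A)$ and $\prob(\{S_g\le\gamma\}\cap A)\le\Psi_e(\gamma+\lambda)$. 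These are the raw inequalities from which the lower and upper halves of \eqref{eq:cdf-bound} must be extracted, after the $\pm\lambda$ shift of the argument is absorbed.

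The delicate step, which I expect to be the main obstacle, is passing from the joint probabilities $\prob(\{\cdot\}\cap A)$ to the multiplicative factors $q$ and $q^{-1}$. The lower bound in the stated form needs $\prob(\{S_e\le\gamma-\lambda\}\cap A)\ge q\,\Psi_e(\gamma-\lambda)$, i.e.\ a bound $\prob(A\mid S_e\le\gamma-\lambda)\ge q$, and symmetrically the upper bound needs $\prob(A\mid S_g\le\gamma)\ge q$ so that $q\,\Psi_g(\gamma)\le\prob(\{S_g\le\gamma\}\cap A)$; yet \eqref{eq:fabs-sg-se} only supplies the \emph{unconditional} $\prob(A)\ge q$. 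The clean way to close this gap is to show that $A$ is not negatively associated with the relevant level sets of $S_e$ and $S_g$ (for instance via an independence or stochastic-monotonicity property of the closeness event with respect to the statistic value), and only then replace each joint probability by the product. I would therefore verify such an association for the embedding-induced coupling at hand before invoking it; should it fail in full generality, the very same two inclusions still deliver the additive Fr\'echet-inequality bounds $\Psi_e(\gamma-\lambda)-(1-q)\le\Psi_g(\gamma)\le\Psi_e(\gamma+\lambda)+(1-q)$, which act as a fallback and a sanity check, since both they and \eqref{eq:cdf-bound} collapse to $\Psi_g=\Psi_e$ in the noiseless limit $\lambda\to0$, $q\to1$.
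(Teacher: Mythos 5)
Your reduction to two real random variables on the common space $(\mc G^T,Q_0^T)$ and the two level-set inclusions are exactly the mechanism the paper uses: its auxiliary lemma in the appendix conditions on the closeness event $A$ and exploits that on $A$ one has $s_1(Y)\le u(s_2(Y))$ with $u(x)=x+\lambda$, applied twice with the roles of $s_g$ and $s_e$ swapped. The obstacle you flag — passing from $\prob(\{\cdot\}\cap A)$ to the multiplicative factors $q$ and $q^{-1}$ — is real, and it is precisely the step where the paper's own argument is not airtight: in the decomposition $\prob(s_1(Y)\le u(\gamma)\mid Y\in A)=\pi(-|-)\,\Psi_2(\gamma)+\pi(-|+)\,(1-\Psi_2(\gamma))$, the weight multiplying $\pi(-|-)$ should be the \emph{conditional} probability $\prob(s_2(Y)\le\gamma\mid Y\in A)$, not the unconditional CDF $\Psi_2(\gamma)$; the two coincide only under the independence or positive-association property you say you would need to verify, whereas \eqref{eq:fabs-sg-se} supplies only $\prob(A)\ge q$. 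The multiplicative lower bound $q\Psi_e(\gamma-\lambda)$ is strictly stronger than the additive bound $\Psi_e(\gamma-\lambda)-(1-q)$ whenever $q<1$ and $\Psi_e(\gamma-\lambda)<1$, and a two-point construction (e.g.\ $S_e\in\{0,10\}$ equiprobable, $S_g=S_e$ on $\{S_e=10\}$ and $S_g=5$ otherwise, $\lambda=1$, $q=1/2$, $\gamma=1$) gives $\Psi_g(\gamma)=0<q\,\Psi_e(\gamma-\lambda)=1/4$, so the multiplicative form cannot follow from \eqref{eq:fabs-sg-se} alone.

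So your proposal does not close the gap, but the gap lies in the proposition as stated rather than in your reasoning. What does follow rigorously from \eqref{eq:fabs-sg-se} are your additive Fr\'echet bounds $\Psi_e(\gamma-\lambda)-(1-q)\le\Psi_g(\gamma)\le\Psi_e(\gamma+\lambda)+(1-q)$, which still deliver a bracketing $\alpha_e'\le\alpha_g\le\alpha_e''$ of the graph-domain significance level and collapse to $\Psi_g=\Psi_e$ as $\lambda\to0$, $q\to1$. To recover the multiplicative form one must add the conditional hypotheses $\prob(A\mid s_e(t;\map(\vec g))\le\gamma-\lambda)\ge q$ and $\prob(A\mid s_g(t;\vec g)\le\gamma)\ge q$ (or independence of $A$ from the relevant level sets), exactly as you anticipate; you should state this extra assumption explicitly rather than leave it as something to "verify for the coupling at hand."
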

With Proposition \ref{prop:fabs-sg-se}, the significance levels, and respective thresholds, can be identified. In fact, by evaluating the bounds in~\eqref{eq:cdf-bound} at $\gamma=\gamma_g(t)$ in \eqref{eq:cpm-test-rule-graphs}
\begin{equation*}
q\Psi_e\left(\gamma_g(t)-\lambda\right)
\leq 
1-\alpha_g
\leq 
q^{-1}\Psi_e\left(\gamma_g(t)+\lambda\right),
\end{equation*}
and defining 
$\alpha_e'  = 1- q^{-1}\Psi_e\left(\gamma_g(t)+\lambda\right)$,
$\alpha_e'' = 1-q\Psi_e\left(\gamma_g(t)-\lambda\right)$,
we obtain that
$\alpha_e' \leq \alpha_g \leq \alpha_e''$;
the associated thresholds $\gamma_e', \gamma_e''$ are those for which $\alpha_e'=\Psi(\gamma_e')$ and $\alpha_e''=\Psi(\gamma_e'')$.
We conclude that, with a confidence \emph{at least} $1-\alpha_g$, if $s_e(t)\le\gamma_e'$ then no change has taken place in the graph domain and, conversely, if $s_e(t)\ge\gamma_e''$ then a change has occurred in the graph domain.
Finally, it is worth observing that, when $\gamma_e'<s_e(t)<\gamma_e''$, it is not possible to make a \emph{reliable} decision as a consequence of the severe distortion introduced by the embedding procedure.
If the embedding is isometric, instead, for any $\lambda>0$, \eqref{eq:fabs-sg-se} holds with probability $q=1$ and
\eqref{eq:cdf-bound} reduces to equality $\Psi_e(\gamma)=\Psi_g(\gamma)$ for any $\gamma$.

A similar reasoning can be done in terms of $p$-values. The subsequent Proposition~\ref{prop:p-vals} shows how to bound the $p$-value $p_g$ associated with graph statistic $s_g(t,\vec g^*)$, evaluated on a specific observed sequence $\vec g^*$, with two $p$-values $p_e',p_e''$ concerning the vector statistic $s_e(t, \map(\vec g^*))$.
\begin{proposition}
\label{prop:p-vals}
Let us consider the assumptions made in Proposition~\ref{prop:fabs-sg-se}, and let $\vec g^*=\{g_1^*,\dots,g_T^*\}$ and $\vec x^*=\{\map(g_1^*), \dots, \map(g_T^*)\}$ be realizations of random sequences $\vec g$ and $\vec x$, respectively.
Further, let
\begin{align*}
  p_g&  =\prob_{\vec g\sim Q_0^T}(s_g(t;\vec g)> s_g(t;\vec g^*) )
\\p_e'& =\prob_{\vec g\sim Q_0^T}(s_e(t;\map(\vec g))> s_e(t;\vec x^*)+2\lambda)
\\p_e''&=\prob_{\vec g\sim Q_0^T}(s_e(t;\map(\vec g))> s_e(t;\vec x^*)-2\lambda)
\end{align*}
be the $p$-values associated with $s_g(t;\vec g^*)$, $s_e(t;\vec x^*)\pm 2\lambda$, respectively.
Then, with probability $q$
\begin{equation}
\label{eq:p-vals}
q^{-1}\,p_e' + 1 - q^{-1}
\leq p_g  \leq 
q\,p_e'' + 1 - q.
\end{equation}
\end{proposition}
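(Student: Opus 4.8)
The plan is to rewrite all three $p$-values in terms of the cumulative density functions $\Psi_g$ and $\Psi_e$ already appearing in Proposition~\ref{prop:fabs-sg-se}, and then to transport the sandwich~\eqref{eq:cdf-bound} from the graph statistic to the embedded one; in this sense the result is the pointwise, $p$-value companion of Proposition~\ref{prop:fabs-sg-se}. Writing $s_g^* = s_g(t;\vec g^*)$ and $s_e^* = s_e(t;\vec x^*)$ for the observed values of the two statistics, the definitions give $p_g = 1 - \Psi_g(s_g^*)$, $p_e' = 1 - \Psi_e(s_e^* + 2\lambda)$, and $p_e'' = 1 - \Psi_e(s_e^* - 2\lambda)$, since each $p$-value is the upper-tail probability of the corresponding statistic evaluated at the observed point and $\Psi_e$ is by assumption the CDF of $s_e(t;\map(\vec g))$. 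As the definitions use strict inequalities, these identities are exact.

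Next I would localize on the favorable event. By hypothesis~\eqref{eq:fabs-sg-se}, the realization $\vec g^*$ --- being itself a draw from $Q_0^T$ --- satisfies $|s_g^* - s_e^*| \leq \lambda$ with probability at least $q$; this is precisely the ``with probability $q$'' clause of the statement. On that event one has $s_e^* - \lambda \leq s_g^* \leq s_e^* + \lambda$, and since $\Psi_g$ is nondecreasing, monotonicity yields the sandwich $1 - \Psi_g(s_e^* + \lambda) \leq p_g \leq 1 - \Psi_g(s_e^* - \lambda)$.

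It then remains to replace the two occurrences of $\Psi_g$ by $\Psi_e$ using~\eqref{eq:cdf-bound}. For the upper bound I would apply the left inequality of~\eqref{eq:cdf-bound} at $\gamma = s_e^* - \lambda$, giving $\Psi_g(s_e^* - \lambda) \geq q\,\Psi_e(s_e^* - 2\lambda) = q(1 - p_e'')$, whence $p_g \leq 1 - q(1 - p_e'') = q\,p_e'' + 1 - q$. For the lower bound I would apply the right inequality of~\eqref{eq:cdf-bound} at $\gamma = s_e^* + \lambda$, giving $\Psi_g(s_e^* + \lambda) \leq q^{-1}\Psi_e(s_e^* + 2\lambda) = q^{-1}(1 - p_e')$, whence $p_g \geq 1 - q^{-1}(1 - p_e') = q^{-1}p_e' + 1 - q^{-1}$. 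Chaining the two displays reproduces~\eqref{eq:p-vals}.

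I expect the only genuinely delicate point to be the bookkeeping of which direction of~\eqref{eq:cdf-bound} pairs with which tail: because passing from a CDF to a $p$-value flips the inequality (the upper tail is $1 - \Psi$), the lower $p$-value bound must be fed by the upper CDF bound and vice versa, and the $\lambda$-shift coming from the sandwich on $s_g^*$ must be matched with the $\lambda$-shift already present inside~\eqref{eq:cdf-bound} so that the total offset is exactly $2\lambda$, consistent with the definitions of $p_e'$ and $p_e''$. Everything else reduces to monotonicity of the CDFs and elementary rearrangement.
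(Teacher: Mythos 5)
Your proof is correct and follows essentially the same route as the paper's: write each $p$-value as one minus the corresponding CDF, invoke the event $|s_g^*-s_e^*|\le\lambda$ of probability at least $q$, and transport the bound via the sandwich \eqref{eq:cdf-bound}, with the only (immaterial) difference that you apply the closeness event before the CDF sandwich whereas the paper applies \eqref{eq:cdf-bound} at $\gamma=s_g(t;\vec g^*)$ first. Your bookkeeping of which side of \eqref{eq:cdf-bound} feeds which tail is right, and in fact your write-up states the identities $\Psi_e(s_e^*\pm 2\lambda)=1-p_e'$ (resp.\ $1-p_e''$) more carefully than the paper's own proof does.
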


The following subsections propose two CPM tests based on choices of distances for graphs and vectors that are relevant in common applications, for which (i) terms in \eqref{eq:fabs-sg-se} can be made explicit, and (ii) the distribution of $s_e(t;\vec x)$ can be determined. It follows that Propositions \ref{prop:fabs-sg-se} and \ref{prop:p-vals} hold true and the distribution of $s_g(t;\vec x)$ can be derived from that of $s_e(t;\vec x)$, together with the confidence level $1-\alpha_g$.

\subsection{A CPM test for a shift in the Fr\'echet mean}
\label{sec:clt}

The first CPM test we propose addresses the detection of a shift in the mean of the graph distribution. The derived statistical hypotheses are
\begin{align*} 
H_0&:\frmu_{Q_0}=\frmu_{Q_1} \\ 
H_1&:\frmu_{Q_0}\neq\frmu_{Q_1}. 
\end{align*} 
The adopted statistic for the embedding space is defined as:
$$
s_e(t;\vec x)=T\mahal_M^2(\frmu_{\vec x(1,t-1)},\frmu_{\vec x(t,T)}),
$$
which is based on the squared Mahalanobis distance:
\begin{multline}
\label{eq:mahal-distance}
\mahal_M^2\left(\frmu_{\vec x(1,t-1)},\frmu_{\vec x(t,T)}\right) = \\
\left(\frmu_{\vec x(1,t-1)} - \frmu_{\vec x(t,T)}\right)^\top M^{-1}\left(\frmu_{\vec x(1,t-1)} - \frmu_{\vec x(t,T)}\right).
\end{multline}
$\frmu_{\vec x(1,t-1)}$ and $\frmu_{\vec x(t,T)}$ are the sample means and $M$ is the pooled sampling covariance matrix.
Under the stationarity hypothesis, the distribution of statistic $s_e(t;\vec x)$ can be determined; in fact, by applying the central limit theorem, $s_e(t;\vec x)$ is asymptotically distributed as a $\chi^2(d)$, where $d$ denotes the embedding space dimension.
As the distribution of $s_e(t;\vec x)$ is now available in closed-form, a threshold $\gamma_e$ can be set to control the false positive rate.

Accordingly, we define the graph statistic $s_g(t;\vec g)$ as the squared GAM $\graphdist^2(\cdot,\cdot)$ between the mean graphs $\frmu_{\vec g(1,t-1)} $and $\frmu_{\vec g(t,T)}$,
$$
s_g(t;\vec g) = T\,\graphdist^2\left(\frmu_{\vec g(1,t-1)},\frmu_{\vec g(t,T)}\right).
$$
Let us recall that, as we are considering attributed graphs, possibly with a variable number of vertexes, the mean graph elements are intended according to Fr\'echet, as described in Section~\ref{sec:def-graph-mean}. 
Further, we highlight that $\mahal_M(\frmu_{\vec x(1,t-1)},\frmu_{\vec x(t,T)})$ and $\graphdist(\frmu_{\vec g(1,t-1)},\frmu_{\vec g(t,T)})$ are consistent estimators of 
$\mahal_M(\frmu_{F_0},\frmu_{F_1})$ and $\graphdist(\frmu_{Q_0},\frmu_{Q_1})$, respectively, where $F_i$ is the distribution of random vector $x=\map(g)$, for $g\sim Q_i$ and $i\in\{0, 1\}$. In fact, the ordinary and Fr\'echet sample means are consistent estimators of their population counterparts \cite{jain2016statistical}. 

With the above selection for statistics $s_g(t)$ and $s_e(t)$, 
the claim of Proposition~\ref{prop:fabs-sg-se} can be refined and made more explicit.
This is done in following Lemma~\ref{lemma:fabs-sg-se-clt}, which explicitly provides a $q$ for any $\lambda>0$ (see Eq.~\ref{eq:fabs-sg-se}). 
\begin{lemma}
\label{lemma:fabs-sg-se-clt}
Under the assumptions of Proposition~\ref{prop:fabs-sg-se}, there exists a positive constant $V_1(t)$ depending on distribution $Q$ and time $t$, such that, for any $\lambda>0$
$$
\prob\left(|s_e(t;\vec x)-s_g(t;\vec g)| \leq \lambda \right)\geq 1- \lambda^{-1}V_1(t),
$$
with 
$V_1(t)=\tfrac{2 T^2}{(t-1)(T-t+1)}\left(\lambda_d({M})^{-1} \frvar[F]+\frvar[Q]\right).$   
\end{lemma}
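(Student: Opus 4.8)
The plan is to reduce the statement to a first-moment estimate and then invoke Markov's inequality. Since both $s_e(t;\vec x)$ and $s_g(t;\vec g)$ are non-negative, we have $|s_e(t;\vec x)-s_g(t;\vec g)|\le s_e(t;\vec x)+s_g(t;\vec g)$, so it suffices to prove $\expect[s_e(t;\vec x)]+\expect[s_g(t;\vec g)]\le V_1(t)$; the desired bound then follows from $\prob(|s_e-s_g|>\lambda)\le \lambda^{-1}\expect|s_e-s_g|\le \lambda^{-1}V_1(t)$. I would bound the two expectations separately. Throughout I would treat the pooled covariance $M$ as a fixed (population) matrix, so that its smallest eigenvalue $\lambda_d(M)$ factors cleanly out of the expectation; this is the one modelling point that needs care, since for a genuinely random $M$ the scalar $\lambda_d(M)^{-1}$ and the difference $\frmu_{\vec x(1,t-1)}-\frmu_{\vec x(t,T)}$ are dependent and cannot be separated naively.

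For the embedding statistic I would first dominate the Mahalanobis form by the Euclidean one using $M^{-1}\preceq\lambda_d(M)^{-1}I$, which gives $s_e(t;\vec x)\le T\,\lambda_d(M)^{-1}\norm{\frmu_{\vec x(1,t-1)}-\frmu_{\vec x(t,T)}}{2}^2$. Next, expanding about the common population mean $\frmu_F=\expect[\map(g)]$ with the elementary inequality $\norm{a-b}{2}^2\le 2\norm{a}{2}^2+2\norm{b}{2}^2$ (this is what produces the factor $2$ in $V_1(t)$, and it conveniently sidesteps any appeal to independence of the two subsamples), I obtain a bound by $2\norm{\frmu_{\vec x(1,t-1)}-\frmu_F}{2}^2+2\norm{\frmu_{\vec x(t,T)}-\frmu_F}{2}^2$. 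Taking expectations and using that the sample mean of $m$ i.i.d.\ vectors satisfies $\expect\norm{\,\cdot-\frmu_F}{2}^2=\mathrm{tr}(\Sigma)/m=\frvar[F]/m$ (recall that in a Euclidean space the Fr\'echet mean is the ordinary mean and $\frvar[F]=\mathrm{tr}(\Sigma)$ for $\Sigma=\mathrm{Cov}(\map(g))$), the two subsample sizes $t-1$ and $T-t+1$ combine through $\tfrac1{t-1}+\tfrac1{T-t+1}=\tfrac{T}{(t-1)(T-t+1)}$ to yield $\expect[s_e(t;\vec x)]\le \tfrac{2T^2}{(t-1)(T-t+1)}\,\lambda_d(M)^{-1}\frvar[F]$.

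For the graph statistic the crucial step, and the main obstacle, is to reduce the Fr\'echet geometry to the very same Euclidean computation. Under (A2), Theorem 4.23 of \cite{jain2016geometry} furnishes an isometry $\iota$ from the support ball onto a Euclidean set, so that $\graphdist(g_i,g_j)=\norm{\iota(g_i)-\iota(g_j)}{2}$; moreover, minimizing the Fr\'echet sample function amounts to minimizing a sum of squared Euclidean distances, whose minimizer is the centroid, whence $\iota$ carries sample Fr\'echet means to ordinary means, $\iota(\frmu_{\vec g(1,t-1)})=\smean{\iota(\vec g(1,t-1))}$ and $\iota(\frmu_Q)=\expect[\iota(g)]$. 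That the centroid remains inside $\iota(\mathrm{ball})$ and that no graph outside the ball lowers the Fr\'echet function is exactly what (A2) guarantees, and this is where the bounded-support assumption is essential. With this identification $s_g(t;\vec g)=T\norm{\iota(\frmu_{\vec g(1,t-1)})-\iota(\frmu_{\vec g(t,T)})}{2}^2$ is structurally identical to the embedding case, with $\graphdist$ acting as an identity-covariance Mahalanobis distance and $\frvar[Q]=\expect[\graphdist(g,\frmu_Q)^2]=\mathrm{tr}(\mathrm{Cov}(\iota(g)))$ playing the role of $\frvar[F]$. Repeating the factor-two argument verbatim gives $\expect[s_g(t;\vec g)]\le \tfrac{2T^2}{(t-1)(T-t+1)}\frvar[Q]$. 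Summing the two expectation bounds produces $\expect|s_e-s_g|\le V_1(t)$ with the stated $V_1(t)$, and Markov's inequality closes the argument.
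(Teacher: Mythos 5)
Your proof is correct and follows essentially the same route as the paper's: Markov's inequality applied to $|s_e-s_g|\le s_e+s_g$, the Mahalanobis-to-Euclidean bound via $\lambda_d(M)$, the factor-two decomposition about $\frmu_F$ with $\expect\norm{\smean{x}-\frmu_F}{2}^2=\frvar[F]/m$, and the isometric embedding from (A2) to repeat the computation in the graph domain. Your explicit remarks on treating $M$ as fixed and on the isometry carrying Fr\'echet means to centroids merely make precise steps the paper leaves implicit.
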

$\lambda_d({M})$ is the smallest eigenvalue of matrix $M$, and $\frvar[\cdot]$ is the Fr\'echet variation, see Section~\ref{sec:def-graph-mean}; see Section~\ref{sec:proof-lemma:fabs-sg-se-clt} for a proof and detailed explanation.
From above lemma, it follows that Proposition~\ref{prop:fabs-sg-se} holds true for any positive value of $\lambda$, with $q=q(\lambda) = 1- \lambda^{-1}V_1(t)$.
We point out that the constant $V_1(t)$ is proportional to the sum of Fr\'echet variation of $Q_0$ and $F_0$ and therefore it can be considered as a measure for the data spread in both graph and embedding spaces.
%

\subsection{A CPM test to assess generic distribution changes}
\label{sec:energy}
The second proposed CPM test allows to identify any type of changes in stationarity affecting the distribution. As such, the hypothesis test can be formalized as $H_0:Q_0=Q_1$ against $H_1:Q_0\neq Q_1$. The multivariate two-sample test adopted in this CPM test is based on the energy statistic $\mc E(\cdot,\cdot)$ \cite{szekely2004testing} and, accordingly, the statistic in the embedding space is 
\begin{equation}
\label{eq:energy-statistic}
s_e(t;\vec x) = \tfrac{(t-1)(T-t+1)}{T} \mc E(\vec x(1,t-1), \vec x(t, T)),
\end{equation}
with 
\begin{multline}
\label{eq:sample-energy-distance}
\mc E(\vec x(1,t-1), \vec x(t, T)) :=  \frac{2\sum_{i=1}^{t-1}\sum_{j=t}^{T} \norm{x_i-x_j}{2}}{(t-1) (T-t+1)} \\
- \frac{\sum_{i,j=1}^{t-1}  \norm{x_i-x_j}{2}}{(t-1)^2}  - \frac{\sum_{i,j=t}^{T} \norm{x_i-x_j}{2}}{(T-t+1)^2}.
\end{multline}
Asymptotically $s_e(t;\vec x)$ follows a weighted sum of $\chi^2(1)$ distributions, provided the variance of $x_i\in\vec x$ is finite and associated $p$-values can be computed via permutation  \cite{szekely2004testing}.
Sz\'ekely and Rizzo \cite{szekely2004testing} showed also that tests based on $s_e(\cdot)$ are consistent when testing equality of distributions $F_0$ and $F_1$ against the $F_0\neq F_1$ hypothesis, implying that the test is able to detect any discrepancy between distributions.
This follows from the fact that statistic $s_e(t;\vec x)$ is the empirical version of the energy distance $E^2(F_0,F_1)$, which is proven to be a metric distance between distributions $F_0$ and $F_1$ with support on Euclidean spaces,
\begin{multline}
\label{eq:energy-distance}
E^2(F_0,F_1):= 
2\expect[\norm{x_0-x_1}{2}]
\\ - \expect[\norm{x_0-x_0'}{2}] - \expect[\norm{x_1-x_1'}{2}],
\end{multline}
with $x_0,x_0'\sim F_0$ and $x_1,x_1'\sim F_1$ independent random vectors.
Such a property can be extended to more general metric spaces \cite{szekely2013energy} by substituting in Equation~\eqref{eq:energy-distance} the associated metric distance.
In particular, as stated in Proposition~\ref{prop:energy-metric-GAS}, this holds for $(\mc G, \graphdist)$ whenever the graph domain is a proper GAS. 
\begin{proposition} 
\label{prop:energy-metric-GAS}
Let us define $\mc D$ as the set of all probability functions on a measurable space over $(\mc G,\graphdist)$, so that $\bigcup_{Q\in\mc D}Q$ fulfills the support condition of (A2). Then, use in $E^2(\cdot,\cdot)$ of \eqref{eq:energy-distance} the metric distance between samples $\graphdist(\cdot,\cdot)$.
It follows that $(\mc D, E)$ is a metric space.
\end{proposition}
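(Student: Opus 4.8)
The plan is to reduce the claim to the classical fact that the energy distance is a metric on probability measures over Euclidean space, exploiting the isometry granted by Assumption (A2). The four metric axioms for $E$ — non-negativity, symmetry, identity of indiscernibles, and the triangle inequality — will all follow at once once we show that $E$ on $\mc D$ coincides with a Euclidean energy distance under a suitable pushforward of measures. Throughout, recall that $E$ denotes the square root of the quantity $E^2$ defined in \eqref{eq:energy-distance}, with the GAM $\graphdist(\cdot,\cdot)$ substituted for the Euclidean norm.

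First I would invoke (A2): by construction of $\mc D$, the set $\bigcup_{Q\in\mc D}\supp(Q)$ lies inside a ball $B$ centered at the sufficiently asymmetric graph $g^*$. By Theorem~4.23 in \cite{jain2016geometry}, already invoked in Section~\ref{sec:assumptions}, this ball $B$ is isometric to a subset of a Euclidean space $\R^m$; let $\iota\colon B\to\R^m$ denote such an isometry, so that $\graphdist(g,g')=\norm{\iota(g)-\iota(g')}{2}$ for all $g,g'\in B$. For each $Q\in\mc D$ I would then define the pushforward measure $\iota_*Q$ on $\R^m$. Since $\iota$ is injective, the correspondence $Q\mapsto\iota_*Q$ is a bijection between $\mc D$ and a family $\mc D'$ of Borel probability measures on $\R^m$, all supported within the bounded set $\iota(B)$.

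The key step is to observe that the energy distance depends on the distributions only through pairwise distances. Substituting $\graphdist(g,g')=\norm{\iota(g)-\iota(g')}{2}$ into \eqref{eq:energy-distance} and applying the change-of-variables formula for pushforward measures, I would verify the identity $E^2(Q_0,Q_1)=E^2(\iota_*Q_0,\iota_*Q_1)$, where the right-hand side is the ordinary Euclidean energy distance. Because $\iota(B)$ is bounded, every measure in $\mc D'$ has finite first moment, so the classical theory applies: Sz\'ekely and Rizzo \cite{szekely2013energy} prove that $E$ is a genuine metric on probability measures on $\R^m$ with finite first moment. In particular $E\geq0$, $E$ is symmetric, the triangle inequality holds, and $E(\iota_*Q_0,\iota_*Q_1)=0$ if and only if $\iota_*Q_0=\iota_*Q_1$. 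Transporting these properties back through the bijection $Q\mapsto\iota_*Q$ yields that $(\mc D,E)$ is a metric space.

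The only axiom requiring genuine care is the identity of indiscernibles, which relies on Euclidean space having \emph{strong} negative type rather than mere negative type; this is precisely why reducing to a Euclidean ball via (A2), instead of to a generic metric space, is essential, and is where the strength of the assumption is used. I therefore expect the main obstacle to be the measure-theoretic bookkeeping rather than any deep structural argument: one must check that $\iota$ is measurable, that $\iota_*Q$ is well defined on the relevant $\sigma$-algebra, and that the change of variables preserves each of the three expectations in \eqref{eq:energy-distance} exactly. Once these technical points are settled, all metric axioms for $(\mc D,E)$ follow directly from the Euclidean case.
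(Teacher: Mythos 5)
Your proposal is correct and follows essentially the same route as the paper's own (very terse) proof: invoke Assumption (A2) together with Theorem~4.23 of \cite{jain2016geometry} to embed the supporting ball isometrically into a Euclidean space, then transfer the metric property of the energy distance from the Euclidean case established in \cite{szekely2013energy}. Your version simply makes explicit the pushforward bookkeeping and the role of strong negative type that the paper leaves implicit.
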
 
Supported by this fact, we consider as graph statistic
\begin{equation*}
\begin{aligned}
s_g(t;\vec g) =& \tfrac{(t-1)(T-t+1)}{T}\left\lbrace \frac{2\sum_{i=1}^{t-1}\sum_{j=t}^{T} \graphdist(g_i,g_j)}{(t-1) (T-t+1)} \right.\\
&- \left.\frac{\sum_{i,j=1}^{t-1}  \graphdist(g_i,g_j)}{(t-1)^2}  - \frac{\sum_{i,j=t}^{T} \graphdist(g_i,g_j)}{(T-t+1)^2}\right\rbrace.
\end{aligned}
\end{equation*}

Similarly to what we proved for Lemma~\ref{lemma:fabs-sg-se-clt}, Lemma~\ref{lemma:fabs-sg-se-energy} connects statistics $s_e(\cdot)$ and $s_g(\cdot)$ in probability.
This result will then be used in Proposition~\ref{prop:fabs-sg-se} to obtain an explicit relation between confidence levels for the test based on the energy distance. 
\begin{lemma}
\label{lemma:fabs-sg-se-energy}
Under the assumptions of Proposition~\ref{prop:fabs-sg-se}, there exists a positive constant $V_2$ depending on distribution $Q$, such that, for any $\lambda>0$
\begin{equation*}
\prob(|s_e(t;\map(\vec g))-s_g(t; \vec g)|\leq \lambda|H_0) \geq 1-\lambda^{-1}V_2,
\end{equation*}
with 
$V_2=2\,(\expect_{g\sim Q}[\graphdist(g,\frmu_Q)]+\expect_{x\sim F}[\norm{x-\frmu_F}{2}]).$
\end{lemma}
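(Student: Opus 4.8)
The plan is to control $\expect\!\left[\,|s_e(t;\map(\vec g))-s_g(t;\vec g)|\,\middle|\,H_0\right]$ and then invoke Markov's inequality, since $\prob(|s_e-s_g|>\lambda\mid H_0)\le \lambda^{-1}\expect|s_e-s_g|$ immediately yields the stated bound with any $V_2\ge\expect|s_e-s_g|$. I would explicitly warn against the naive route of expanding the difference as an energy-type combination of the pairwise distortions $\norm{\map(g_i)-\map(g_j)}{2}-\graphdist(g_i,g_j)$ and bounding each summand: since one then sums $\Theta(T^2)$ distortions against the energy coefficients, the resulting bound grows with $T$, whereas the target $V_2$ is a constant. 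The correct argument avoids any term-by-term expansion and rests on two structural facts instead.

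First I would observe that \emph{both} statistics are non-negative. Up to the common positive factor $\tfrac{(t-1)(T-t+1)}{T}$, each of $s_e$ and $s_g$ is exactly the plug-in energy distance $E^2(\cdot,\cdot)$ of \eqref{eq:energy-distance} evaluated at the pair of empirical measures of the two sub-samples; the diagonal terms $\norm{x_i-x_i}{2}$ and $\graphdist(g_i,g_i)$ vanish, so the within-sample double sums coincide with the corresponding double expectations under the empirical measures. In the embedding space this quantity is non-negative because the Euclidean distance is of negative type, and in the graph domain it is non-negative because, by Proposition~\ref{prop:energy-metric-GAS}, $E$ is a genuine metric on $\mc D$ under (A1)--(A2), so $E^2\ge 0$ on empirical measures supported in the admissible ball. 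Consequently $|s_e-s_g|\le s_e+s_g$ and therefore $\expect|s_e-s_g|\le \expect[s_e]+\expect[s_g]$.

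Second I would compute these two expectations exactly. Under $H_0$ all $g_i$ are i.i.d.\ $\sim Q$ and all $x_i=\map(g_i)$ are i.i.d.\ $\sim F$, with first moments finite since $\frvar[Q]$ and $\frvar[F]$ are finite under (A2). By linearity, with $n_1=t-1$ and $n_2=T-t+1$, the between-sample double sum contributes $2\expect\norm{x-x'}{2}$, while each within-sample double sum contributes $\tfrac{n-1}{n}\expect\norm{x-x'}{2}$ for the respective size $n$ (the $1/n$ deficit coming from the vanishing diagonal), so the bracket reduces to $\bigl(\tfrac{1}{n_1}+\tfrac{1}{n_2}\bigr)\expect\norm{x-x'}{2}$. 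The prefactor $\tfrac{n_1 n_2}{T}$ cancels this exactly, giving $\expect[s_e]=\expect_{x,x'\sim F}\norm{x-x'}{2}$, and the identical computation in the graph domain gives $\expect[s_g]=\expect_{g,g'\sim Q}\graphdist(g,g')$. This cancellation is precisely what makes $V_2$ independent of $t$ and $T$.

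Finally, a single triangle inequality in each space closes the argument: inserting the Fréchet mean $\frmu_F$ gives $\expect\norm{x-x'}{2}\le 2\expect\norm{x-\frmu_F}{2}$, and inserting $\frmu_Q$ (which exists and is unique under (A2), and for which $\graphdist$ obeys the triangle inequality since $(\mc G,\graphdist)$ is metric by (A1)) gives $\expect\graphdist(g,g')\le 2\expect\graphdist(g,\frmu_Q)$. Summing the two estimates yields $\expect|s_e-s_g|\le 2\bigl(\expect_{g\sim Q}\graphdist(g,\frmu_Q)+\expect_{x\sim F}\norm{x-\frmu_F}{2}\bigr)=V_2$, and Markov's inequality finishes the proof. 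I expect the only genuinely delicate point to be the rigorous justification of the non-negativity of $s_g$, i.e.\ verifying that the sub-sample empirical measures lie in the class $\mc D$ for which Proposition~\ref{prop:energy-metric-GAS} certifies that $E$ is a metric; the remaining steps are just linearity of expectation together with the triangle inequality.
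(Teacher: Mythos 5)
Your proposal is correct and follows essentially the same route as the paper's own proof: Markov's inequality applied to $|s_e-s_g|$, the bound $\expect|s_e-s_g|\le\expect[s_e]+\expect[s_g]$, the exact computation $\expect[s_e]=\expect\norm{x-x'}{2}$ (and its graph-domain analogue) via the cancellation of the prefactor, and a final triangle inequality through the Fr\'echet means. Your explicit justification of the non-negativity of $s_g$ via Proposition~\ref{prop:energy-metric-GAS} is a point the paper leaves implicit, but it does not change the argument.
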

Lemma~\ref{lemma:fabs-sg-se-energy} provides a way to define constant $q$ in terms of $V_2$, i.e., $q=q(\lambda) = 1-\lambda^{-1}V_2$.
In this sense, Lemma~\ref{lemma:fabs-sg-se-energy} is analogous to Lemma~\ref{lemma:fabs-sg-se-clt}; moreover, quantities 
$\expect_{g\sim Q}[\graphdist(g,\frmu_Q)]$ and $\expect_{x\sim F}[\norm{x-\frmu_F}{2}]$ are measures of distribution spread and, accordingly, also Lemma~\ref{lemma:fabs-sg-se-energy} can be interpreted in terms of the data uncertainty.

\section{Identifying multiple change points}
\label{sec:multi-change}

What discussed so far assumes that input sequences contain at most one change point.
Here, we elaborate over the E-divisive approach \cite{matteson2014nonparametric} and design a CPM test able to detect multiple abrupt change points (if present) in a sequence of attributed graphs.
The E-divisive approach relies on a two-sample test $\test(\cdot,\cdot)$ and produces, for a generic sequence $\vec x(a,b), 1\leq a<b\leq T$, a statistic $s_e(t; \vec x(a, b))$ based on the energy distance $\mc E(\cdot,\cdot)$ defined in Equation~\ref{eq:sample-energy-distance}:
\begin{equation}
\label{eq:edivisive-se}
s_e(t; \vec x(a, b)) = \max_{t\le r\le b} 
\mc E(\vec x(a,t-1), \vec x(t, r)).
\end{equation}
As commented below, the maximum over variable $r$ is introduced to take into account the possible presence of multiple change points in the same sequence.

Multiple abrupt change points are detected incrementally.
The algorithm initially takes into account the entire (embedded) sequence $\vec x(1,T)$, and selects time step $\hat t$ so as to maximize the test statistic $s_e(t; \vec x(1, T))$,
\begin{equation}
\label{eq:initial-e-divisive}
\hat t = \argmax_{{2\le t\le T}} \;s_e(t; \vec x(1, T)).
\end{equation}
Time index $\hat t$ is the first discovered change point, provided that the associated $p$-value is lower than a predefined significance level $\alpha_e$. This step looks fairly similar to a typical CPM test for a single change, as the idea is to sweep over all bi-partitions induced by $t=2,\dots,T$.
However, we stress a fundamental difference introduced by the auxiliary variable $r$. In fact, by varying $r$ we can mitigate possible side effects deriving, e.g., from the presence of multiple distributions in $\vec x(t, T)$, making it statistically indistinguishable from $\vec x(1, t-1)$.

In order to describe a generic iteration of the E-divisive technique, let us assume that a set of $k$ different change points, $\{0=\hat t_0 < \hat t_1 < \dots <\hat t_{k-1} < \hat t_k=T\}$, has been already identified (to simplify the notation, endpoints $\hat t_0=0$ and $\hat t_k=T$ are considered change points as well).
The next iteration consists in applying the procedure described above for $i=0,\dots,k-1$, to a sub-sequence $\vec x(\hat t_i+1, \hat t_{i+1})$, obtaining via \eqref{eq:initial-e-divisive} a new candidate change point $\hat t^{(i)}$.
Among these candidate change points, the new change point $\hat t$ is the time index maximizing the associated statistic, i.e., $\hat t=\hat t^{(j)}$, with
\begin{equation*}
j = \argmax_{i\in\{0,\dots,k-1\}} \;s_e\left(\hat t^{(i)}; \vec x\left(\hat t_i, \hat t_{i+1}-1\right)\right).
\end{equation*}
Again, if the $p$-value associated with $\hat t$ is lower than a predefined significance level $\alpha_e$, then $\hat t$ is retained as an additional change point. The procedure is repeated until the outcome of a test is not statistically significant, meaning that all change points present in the sequence have been identified.

The statistic $s_e(t; \vec x(a,b))$ used for the detection in the embedding space can be associated with the graph statistic:
\begin{multline*}
s_g(t;\vec g(a,b)) = \max_{t \le r\le b} \left\{\frac{2\sum_{i=a}^{t-1}\sum_{j=t}^{r} \norm{g_i-g_j}{2}}{(t-a) (r-t+1)} \right.\\
\left.- \frac{\sum_{i,j=a}^{t-1}  \norm{g_i-g_j}{2}}{(t-a)^2} - \frac{\sum_{i,j=t}^{r} \norm{g_i-g_j}{2}}{(r-t+1)^2}\right\}.
\end{multline*}
Similarly to what discussed in Section~\ref{sec:theory}, here we prove the following Lemma~\ref{lemma:fabs-sg-se-edivisive} which demonstrates how to relate the significance levels in the graph and embedding domains.
\begin{lemma}
\label{lemma:fabs-sg-se-edivisive}
Let us consider again the assumptions made in Proposition~\ref{prop:fabs-sg-se}.
Let $\vec g=\{g_a,\dots,g_b\}$, $1\le a < b \le T$, be a sequence of graphs and let $\vec x=\map(\vec g)$ be the associated sequence in the embedding space.
Then, there exists a positive constant $V_3(t)$ that depends on distribution $Q$ and time $t$, such that, for any $\lambda>0$
\begin{equation*}
\prob_{\vec g\sim Q^{b-a}}(|s_e(t;\map(\vec g))-s_g(t; \vec g)|\leq \lambda|H_0) \geq 1-\lambda^{-1}V_3(t),
\end{equation*}
with $V_3(t)=2\big(\tfrac{b-a}{t-a}+\log(b-a+1)\big)      
\big(\expect_{g\sim Q}[\graphdist(g,\frmu_Q)]
+\expect_{x\sim F}[\norm{x-\frmu_F}{2}]\big)$.
\end{lemma}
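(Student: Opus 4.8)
The plan is to reuse the template of Lemma~\ref{lemma:fabs-sg-se-energy}, the single genuinely new element being the maximization over the auxiliary index $r$ in \eqref{eq:edivisive-se}. The steps are: (i) observe that both statistics are non-negative, so that $|s_e-s_g|\le s_e+s_g$ and it suffices to bound $\expect[s_e\mid H_0]+\expect[s_g\mid H_0]$; (ii) replace the maximum over $r$ by a sum; (iii) evaluate the expectation of the energy $V$-statistic \eqref{eq:sample-energy-distance} for each fixed $r$ under $H_0$; (iv) collapse the resulting series into the factor $\tfrac{b-a}{t-a}+\log(b-a+1)$; and (v) pass from expected pairwise distances to expected distances from the Fr\'echet mean, concluding with Markov's inequality.

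First I would compute the per-$r$ expectation. Under $H_0$ the graphs $g_a,\dots,g_b$ are i.i.d.\ from $Q$, hence the embedded points $\map(g_a),\dots,\map(g_b)$ are i.i.d.\ from $F$. Fixing a split $(t,r)$ and abbreviating the block sizes $n_1=t-a$ and $n_2=r-t+1$, every off-diagonal pair of embedded points has expected distance $\expect_{X,Y\sim F}[\norm{X-Y}{2}]$ while the diagonal terms vanish; counting the three sums in \eqref{eq:sample-energy-distance} gives
$$\expect\big[\mc E(\map(\vec g(a,t-1)),\map(\vec g(t,r)))\mid H_0\big]=\expect_{X,Y\sim F}[\norm{X-Y}{2}]\Big(\tfrac{1}{n_1}+\tfrac{1}{n_2}\Big),$$
and the identical count in the graph domain replaces $\norm{X-Y}{2}$ by $\graphdist(g,g')$.

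Next I would remove the maximum. The sample energy statistic is non-negative --- for the embedded points since $(\R^d,\norm{\cdot}{2})$ is of negative type, and for the graphs since under (A2) the support is isometric to a Euclidean space (Section~\ref{sec:assumptions}), which is again of negative type. Hence each summand in \eqref{eq:edivisive-se} is $\ge 0$ and $s_e=\max_{t\le r\le b}\mc E(\cdot,\cdot)\le\sum_{r=t}^{b}\mc E(\cdot,\cdot)$. Taking expectations and inserting the per-$r$ value,
$$\expect[s_e\mid H_0]\le\expect_{X,Y\sim F}[\norm{X-Y}{2}]\Big(\tfrac{b-t+1}{t-a}+\sum_{k=1}^{b-t+1}\tfrac1k\Big).$$
Here $\tfrac{b-t+1}{t-a}\le\tfrac{b-a}{t-a}$, and the harmonic sum is bounded by $\log(b-a+1)$ up to an additive constant that can be absorbed into the first term, yielding the factor $\tfrac{b-a}{t-a}+\log(b-a+1)$; the same bound holds for $\expect[s_g\mid H_0]$ with $\graphdist$ in place of $\norm{\cdot}{2}$.

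Finally I would turn pairwise distances into spreads about the mean through the triangle inequality, $\expect[\norm{X-Y}{2}]\le 2\,\expect_{x\sim F}[\norm{x-\frmu_F}{2}]$ and $\expect[\graphdist(g,g')]\le 2\,\expect_{g\sim Q}[\graphdist(g,\frmu_Q)]$. Summing the two bounds gives $\expect[s_e\mid H_0]+\expect[s_g\mid H_0]\le V_3(t)$, so that $\expect[\,|s_e-s_g|\mid H_0]\le V_3(t)$ and Markov's inequality yields $\prob(|s_e-s_g|\le\lambda\mid H_0)\ge 1-\lambda^{-1}V_3(t)$. I expect the main obstacle to be exactly the maximum over $r$: bounding it by the sum is what introduces the extra $\log(b-a+1)$ factor absent from Lemma~\ref{lemma:fabs-sg-se-energy}, and the harmonic-sum estimate together with the off-by-one bookkeeping of the block sizes $n_1,n_2$ is the only place where more than a transcription of the single-change proof is needed.
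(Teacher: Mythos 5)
Your proof follows the paper's argument essentially step for step: bound the maximum over $r$ by the sum of the energy terms (using their non-negativity), compute the per-$r$ expectation under $H_0$ as $\big(\tfrac{1}{t-a}+\tfrac{1}{r-t+1}\big)\expect[\norm{x-x'}{2}]$, collapse the resulting series into $\tfrac{b-a}{t-a}+\log(b-a+1)$ via the harmonic-sum/integral estimate, convert pairwise distances into spreads about the Fr\'echet mean, and conclude with Markov's inequality applied to $|s_e-s_g|\le s_e+s_g$. The only differences are cosmetic: you explicitly justify the non-negativity of the sample energy statistic via negative type (which the paper leaves implicit), and your off-by-one bookkeeping in absorbing the additive constant of the harmonic sum is no looser than the paper's own.
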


Lemma~\ref{lemma:fabs-sg-se-edivisive} is used every time a new candidate change point $\hat t^{(j)}$ is found via Eq.~\ref{eq:initial-e-divisive}, and with extrema $a=\hat t_j$ and $b=\hat t_{j+1}-1$.
The bound above takes different values depending on the candidate change point, as it happens also with Lemma~\ref{lemma:fabs-sg-se-energy}. In particular, the ratio $(b-a)/(t-a)$ in $V_3(t)$ can be interpreted as the inverse of the relative location of $t$ in the interval $[a,b]$ under analysis. 
Notice that its value is unbounded when $t$ approaches $a$, as the estimation of the expectation of $s_e(t)$ in Eq.~\ref{eq:edivisive-se} involves computing a maximum value. This issue can be mitigated by considering a margin $m$ so that $t$ is selected in the range $a+m,\dots,b-m$. Moreover, as mentioned in Section~\ref{sec:methodology}, such a margin would also be useful to avoid issues related to the power of the test.

\section{Experiments}
\label{sec:experiments}

\begin{table}
    \centering    
    \caption{Data sets taken into account.}
    \label{tab:exp-data}
    \begin{tabular}{|ccccc|}
        \hline
        
        \hline
        {\textbf{ID}} &  {\textbf{Database}} & {\textbf{Data set}} & {\textbf{Classes}} & \textbf{\# graphs} \\
        \hline
        
        \hline
        \multirow{2}{*}{Del} & \multirow{2}{*}{Delaunay} & \multirow{2}{*}{Delaunay} & 0, 6, 8, 10, 12, & \multirow{2}{*}{(100, ..., 100)}\\
                            &  &  & 14, 16, 18, 20 & \\
        \hline
        Let & IAM & Letter & A, E, F, H, I & (150, ..., 150)\\
        \hline
        \multirow{2}{*}{AIDS} & \multirow{2}{*}{IAM} & \multirow{2}{*}{AIDS} & 0 (inactive), & \multirow{2}{*}{(1600 ,400)}\\
                & &        & 1 (active) &  \\
        \hline
        \multirow{2}{*}{Mut} & \multirow{2}{*}{IAM} & \multirow{2}{*}{Mutagenicity} & 0 (nonmutag.) & \multirow{2}{*}{(1963 ,2401)} \\ 
                   &&              & 1 (mutag.) & \\ 
        \hline
        D1 & Kaggle & Dog1 &  & (418, 178) \\
        D2 & Kaggle & Dog2 & 0 (preictal), & (1148, 172) \\
        D3 & Kaggle & Dog3 & 1 (ictal) & (4760, 480) \\
        D4 & Kaggle & Dog4 &  & (2790, 257) \\
        \hline
        H1 & Kaggle & Human1 &  & (104, 70) \\
        H2 & Kaggle & Human2 &  & (2990, 151) \\
        H3 & Kaggle & Human3 & 0 (preictal), & (714, 327) \\
        H4 & Kaggle & Human4 & 1 (ictal) & (190, 20) \\
        H5 & Kaggle & Human5 &  & (2610, 135) \\
        H6 & Kaggle & Human6 &  & (2772, 225) \\
        H7 & Kaggle & Human7 &  & (3239, 282) \\
        H7 & Kaggle & Human8 &  & (1710, 180) \\
        \hline
    \end{tabular}
\end{table}
\begin{figure}
\centering
\includegraphics[scale=0.4,keepaspectratio=true]{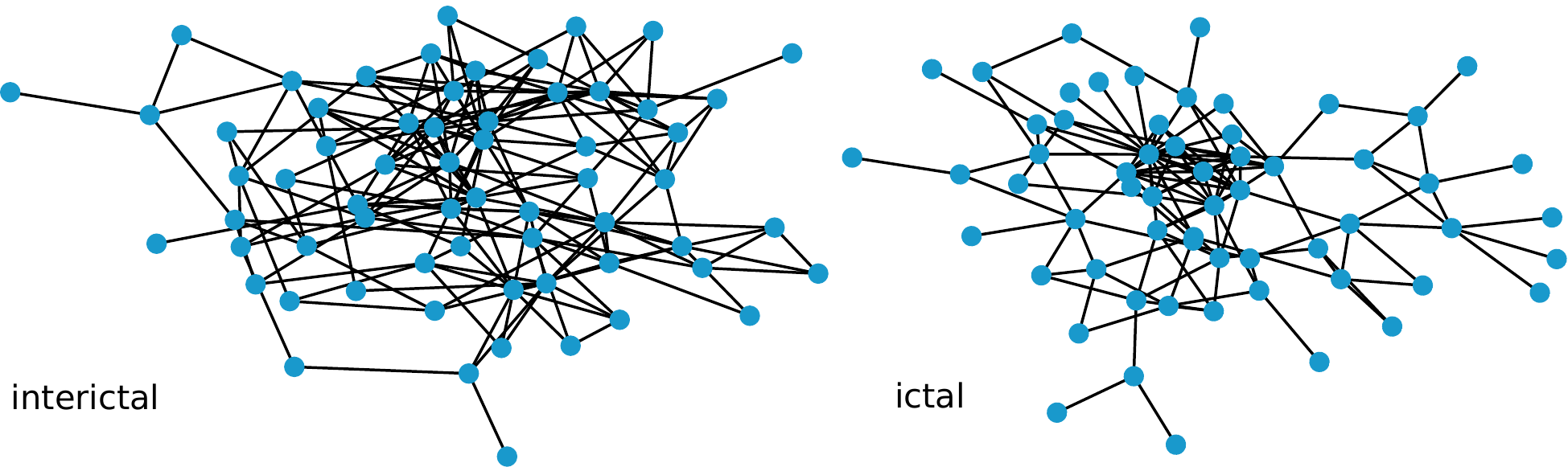}
\vspace{-.2cm}
\caption{Two example graphs extracted from interictal and ictal classes of subject H1, respectively. The graphs are represented by drawing only those edges whose attributes (Pearson correlation) are greater than $0.2$.}
\label{fig:graphs}
\end{figure}

Here, we perform simulations in order to assess the effectiveness of the proposed CPMs.
We will consider both synthetically generated sequences of graphs and real data.
The real-world data come from different application domains, including bio-molecules and electro-electroencephalograms (EEG), thus covering different case studies of practical relevance.

\subsection{Data}

Table \ref{tab:exp-data} summarizes the main characteristics of the data sets taken into account, which are furhter described in the following sections. 

\subsubsection{Delaunay graphs}
As synthetic --controlled-- data, we take into account the Delaunay graphs first introduced in \cite{zambon2017detecting}.
Delaunay graphs are geometric graphs composed by 7 vertexes and 2-dimensional real coordinates as vertex attributes; the topology of the graph is defined by the Delaunay triangulation of the vertexes. Different classes of graphs can be generated by considering different coordinates for the vertexes. For a detailed description of the generation process, we refer the reader to Zambon \textit{et al.}~\cite{zambon2017detecting}.
Changes in stationarity along a sequence of Delaunay graphs are simulated by inducing a transition between different classes of graphs. Delaunay data set constains several classes. In particular, here we will consider a reference class, called ``class 0''. Class 1 contains instance graphs very different from those in class 0. As the class index increases, the graph instances of that class become similar to those of class 0, so that, e.g., distinguishing class 0 from class 8 is easier than distinguishing class 0 from class 12. Accordingly, detecting a change is more difficult in the latter case. In this paper, we consider classes 0, 6, 8, 10, 12, 14, 16, 18, and 20 as reported in Table~\ref{tab:exp-data}.

\subsubsection{IAM database}
We will experiment on three data sets from the IAM graph database \cite{riesen+bunke2008}, namely the Letter, AIDS and Mutagenicity data sets.
Letter data set is composed of handwriting letters represented as graphs. 
There are 15 classes, each of which is associated to a different letter for a total of $15\times 150 = 2250$ graphs.
Graphs are characterized by a variable topology and number of vertexes (from 2 to 9 vertexes); a 2-dimensional vector is associated to each vertex as attribute. AIDS and Mutagenicity data sets contain graph representations of biological molecules.
Both data sets contain two classes of graphs. 
Originally, the Letter data set comes in three different versions; here we consider the data set having the highest variability in order to make the problem more difficult. 
The AIDS data set contains 1600 inactive graphs and 400 graphs representing active molecules.
Mutagenicity data set contains 1963 nonmutagenic molecules and 2401 mutagenic molecules.
In both data sets, the graphs are characteriezd by chemical symbols as vertex attributes (i.e., categorical data) and valence of the chemical links as edge attributes.
The AIDS data set contains graphs with as much as 95 vertexes; Mutagenicity data set contains larger graphs with up to 417 vertexes.

\subsubsection{Detection of epileptic seizures from iEEGs}
The final case study we take into account refers to the problem of detecting epileptic seizures from intracranial electro-encephalogram (iEEG) recordings. Notably, we use the ``Detect seizures in intracranial EEG recordings'' database by UPenn and Mayo Clinic.%
    \footnote{\url{https://www.kaggle.com/c/seizure-detection}} 
The database contains recordings related to different subjects (8 humans and 4 dogs).
The recordings belong to two classes, denoting interictal and ictal segments. The first one refers to recordings denoting normal brain activity, while the second one refers to seizure events.
Each subject data set contains a pre-defined split in training and a test set, but only the training clips are labeled. Accordingly, in order to rely on a ground truth change point, here we will consider the training set of each subject only.

For each subject, the data is available as a sequence of one-second clips with a variable number of channels (from 16 to 72), giving rise to a multivariate stream of iEEGs. 
In order to model (statistical) coupling among the activity recorded in different brain regions, it is common to represent iEEG data as functional connectivity networks \cite{bastos2016tutorial}. Functional connectivity networks are weighted graphs, where (usually) the vertexes correspond to the signals recorded by the electrodes (or channels of the electrodes) and the edge weights represent their coupling strength. Many connectivity measures have been proposed for this purpose: here we consider Pearson correlation computed in the high-gamma band (70-100Hz). We also characterize each vertex with the leading four wavelet coefficients \cite{bastos2016tutorial} computed from the related raw signals by means of the discrete wavelet transform. Figure~\ref{fig:graphs} provides a visual representation of two example graphs associated with different regimes.

\subsection{Experimental setup and implementation details}
\label{sec:setup}
To obtain a stationary sequence of graphs from one of the above-mentioned data sets, we select all graphs of one class and randomly arrange them in sequence.
In order to simulate $k\ge 1$ change points, we generate $k+1$ stationary sequences from different classes and, finally, concatenate them to form a single longer sequence $\vec g$.
Throughout the rest of the paper, we indicate a particular sequence $\vec g$ with the ID of the considered data set (see Table~\ref{tab:exp-data}) and with sub-scripted the list of classes indicating the order by which they occur in the sequence $\vec g$. For example, experiment ``Let\textsubscript{A,E,F}'' refers to the Letter data set and considers a sequence $\vec g(1,450)$, where $\vec g(1,150)$ contains all graphs of class A, $\vec g(151,300)$ contains all graphs of class E, and, finally, $\vec g(301,450)$ contains all graphs of class F.

Half of the graphs in each original sequence $\vec g$ are randomly selected without repetition as training graphs, and are used to learn the graph embedding map $\map(\cdot)$.
The remaining graphs constitute the actual sequence on which the tests are run to identify the presence of change points. 
The adopted graph embedding technique is the dissimilarity representation \cite{pekalska2005dissimilarity}, which considers $d$ prototype graphs $\{r_1,\dots,r_d\}\in\mc G$ and maps each graph $g_i\in \mc G$ to a vector $x_i = (\graphdist(r_1,g_i),\dots,\graphdist(r_d,g_i))\in\R^d$ containing GAM evaluations of $g_i$ with respect to each prototype.
Here, $\graphdist(\cdot,\cdot)$ is built on the Euclidean kernel for real-valued attributes.
When the attribute set is of categorical data, the GAM is based on the delta-like kernel, which assigns 1 if the attributes are equal and 0 otherwise. 
The prototypes are selected among the training graphs according to the $d$-centers method \cite{riesen2007graph}, with $d>0$ being the number of centers (prototypes) and, hence, corresponds to the embedding dimension $d$.
The embedding dimension $d$ is a critical parameters that impacts on the quality of the embedding, affecting the sharpness of bounds of the form in \eqref{eq:fabs-sg-se}. Since the optimal value of embedding dimension $d$ depends on the specific data sets at hand, here we set $d$ to $3$, which turned out to be an effective choice in most of the cases. We leave a more focused study on the impact of the embedding dimension $d$ as future work.

The methodology presented in Section~\ref{sec:methodology} is evaluated on sequences with zero, one and multiple change points generated on the above data sets. To properly assess the performance, the experiments are run on the three proposed methods \mucpm\ (Section~\ref{sec:clt}), \ecpm\ (Section~\ref{sec:energy}), and \ediv\ (Section~\ref{sec:multi-change}) as instances of the proposed methodology. The significance level $\alpha$ of the test is set to $0.01$.

Four performance metrics are considered to evaluate the effectiveness of the methods.
The first two metrics are the true positive rate (TPR) and false positive rate (FPR), where ``positive'' refers to an actual change in the sequence. Here TPR ranges in $[0,1]$ and assesses the rate of detected changes over the total number of ground truth changes. Conversely, the FPR can be greater than 1, as it is the rate of changes identified beyond the ground truth ones over the total number of stationary sequences.
The third metric quantifies the mean distance of the detected change from the ground truth change-point.
In order to make it independent from the length $(b-a) + 1$ of sequence $\vec g(a, b)$, we consider the normalized discrepancy $|t^*-\hat t|/((b-a)+1)$ between change point $t^*$ and the estimated change point $\hat t$.
We call such a measure relative time-step error (RTE).
The last metric is the adjusted Rand index (ARI) \cite{morey1984measurement}, which is used for comparing two partitions of the same set. ARI ranges in the $[-1, 1]$ interval, with 1 corresponding to partitions in perfect agreement; an ARI equal to 0 is expected when the partitions are completely random; negative values denote partitions in disagreement.
To robustly estimate the aforementioned metrics and assess their variability, we repeated each experiment 100 times.

\subsection{Results}
\newcommand{\cisize}{\scriptsize}

%
\begin{table*}
\caption{Comparison of methods \mucpm,\ \ecpm,\ and \ediv\ on sequences of Delaunay graphs with single change point.
The problems are listed in increasing level of difficulty. Statistically better results are in bold. Not applicable measures are denoted with `---'.}
\label{tab:del}
\centering
\begin{tabular}{|cc|c>{\cisize}c|c>{\cisize}c|c>{\cisize}c|c>{\cisize}c|}
\hline

\hline
\multicolumn{2}{|c|}{\textbf{Experiment}} & \multicolumn{2}{c|}{\textbf{TPR}} & \multicolumn{2}{c|}{\textbf{FPR}} & \multicolumn{2}{c|}{\textbf{ARI}} & \multicolumn{2}{c|}{\textbf{RTE}} \\
    Seq. ID &  Method & mean        & 95\% c.i.    & mean        & 95\% c.i.       & mean        & 95\% c.i.       & mean        & 95\% c.i.   \\
\hline

\hline
Del\textsubscript{0,10} & \mucpm & 1.000   & [1.000, 1.000]        & --- & ---         & 1.000         & [1.000, 1.000]        & 0.000         & [0.000, 0.000]        \\
Del\textsubscript{0,12} & \mucpm & 1.000   & [1.000, 1.000]        & --- & ---         & 0.999         & [1.000, 1.000]        & 0.000         & [0.000, 0.000]        \\
Del\textsubscript{0,14} & \mucpm & 1.000   & [1.000, 1.000]        & --- & ---         & 0.998         & [1.000, 1.000]        & 0.000         & [0.000, 0.000]        \\
Del\textsubscript{0,16} & \mucpm & 0.010   & [0.000, 0.030]        & --- & ---         & 0.010         & [0.000, 0.000]        & 0.000         & [0.000, 0.000]        \\
Del\textsubscript{0,18} & \mucpm & 0.000   & [0.000, 0.000]        & --- & ---         & 0.000         & [0.000, 0.000]        &  --- & ---      \\
Del\textsubscript{0,20} & \mucpm & 0.000   & [0.000, 0.000]        & --- & ---         & 0.000         & [0.000, 0.000]        &  --- & ---      \\
\hline
Del\textsubscript{0,10} & \ecpm  & 1.000   & [1.000, 1.000]        & --- & ---         & 1.000         & [1.000, 1.000]        & 0.000         & [0.000, 0.000]        \\
Del\textsubscript{0,12} & \ecpm  & 1.000   & [1.000, 1.000]        & --- & ---         & 1.000         & [1.000, 1.000]        & 0.000         & [0.000, 0.000]        \\
Del\textsubscript{0,14} & \ecpm  & 1.000   & [1.000, 1.000]        & --- & ---         & 0.997         & [0.960, 1.000]        & 0.001         & [0.000, 0.010]        \\
Del\textsubscript{0,16} & \ecpm  & \textbf{0.890}   & [0.830, 0.950]        & --- & ---         & 0.797         & [0.000, 1.000]        & 0.030         & [0.000, 0.228]        \\
Del\textsubscript{0,18} & \ecpm  & \textbf{0.730}   & [0.640, 0.810]        & --- & ---         & 0.514         & [0.000, 1.000]        & 0.089         & [0.000, 0.310]        \\
Del\textsubscript{0,20} & \ecpm  & \textbf{0.270}   & [0.190, 0.360]        & --- & ---         & 0.193         & [0.000, 1.000]        & 0.084         & [0.000, 0.294]        \\
\hline
Del\textsubscript{0,10} & \ediv  & 1.000   & [1.000, 1.000]        & 0.010         & [0.010, 0.010]        & 0.942         & [0.939, 0.960]        & 0.017         & [0.010, 0.015]        \\
Del\textsubscript{0,12} & \ediv  & 1.000   & [1.000, 1.000]        & 0.005         & [0.005, 0.005]        & 0.956         & [0.960, 0.981]        & 0.011         & [0.005, 0.010]        \\
Del\textsubscript{0,14} & \ediv  & 1.000   & [1.000, 1.000]        & 0.000         & [0.000, 0.000]        & 0.960         & [0.960, 0.960]        & 0.010         & [0.010, 0.010]        \\
Del\textsubscript{0,16} & \ediv  & 0.660   & [0.570, 0.750]        & 0.010         & [0.010, 0.010]        & 0.588         & [0.000, 1.000]        & 0.030         & [0.000, 0.133]        \\
Del\textsubscript{0,18} & \ediv  & 0.320   & [0.230, 0.410]        & 0.005         & [0.005, 0.005]        & 0.237         & [0.000, 0.960]        & 0.075         & [0.008, 0.264]        \\
Del\textsubscript{0,20} & \ediv  & 0.040   & [0.010, 0.080]        & 0.005         & [0.005, 0.005]        & 0.026         & [0.000, 0.385]        & 0.122         & [0.020, 0.336]        \\
\hline
\end{tabular}
\end{table*}

\begin{figure}[t]
\centering
\includegraphics[width=.42\textwidth]{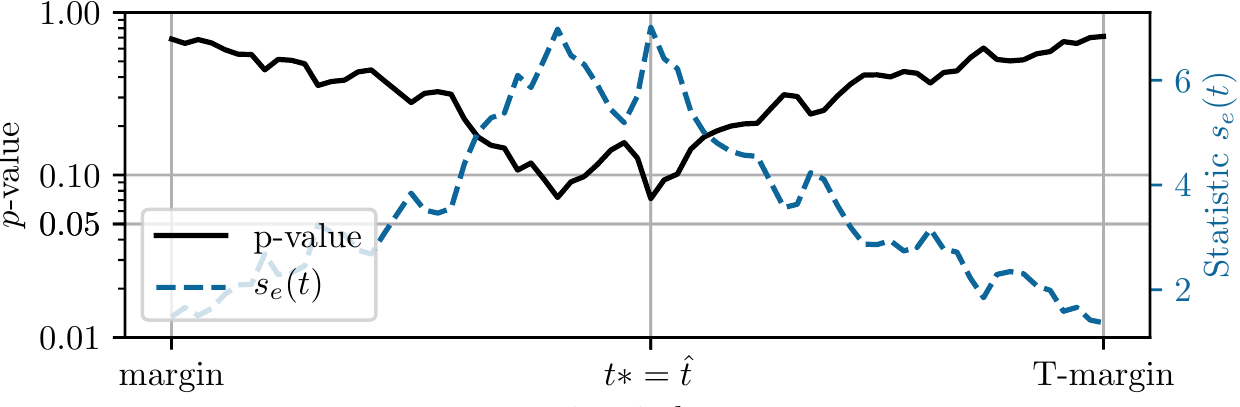}
\includegraphics[width=.42\textwidth]{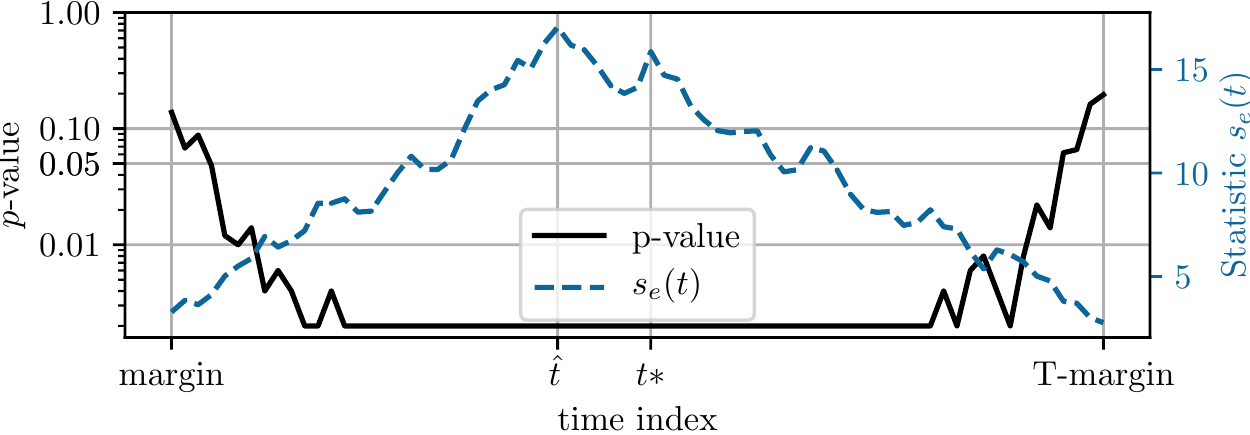}
\caption{The two sub-figures describe the behavior of \mucpm\ (top) and \ecpm\ (bottom) on the sequence Del\textsubscript{0,1}. The figure depicts the embedding statistic $s_e(t)$ and associated $\pval(t)$ at different time steps $t$ with a margin of $15$ time steps, $t=15,\dots,T-15$.}
\label{fig:cpm-example}
\end{figure}

\noindent\textbf{Delaunay graphs: }
We analyze the performance on single-change-point identification in sequences Del\textsubscript{0,$k$}, $k=8,10,12, \dots,20$.
From the results in Table~\ref{tab:del}, it can be observed that, up to class $k=14$, all CPMs perform very well on all metrics taken into account.
Starting from $k=16$, we note that \mucpm\ is not able to identify any change, whereas the other two methods are still able to detect most of them.
Despite the fact that \ecpm\ and \ediv\ are built on the same energy statistic \eqref{eq:energy-statistic}, when the problem becomes harder (i.e., $k\geq 16$), \ecpm\ performed significantly better than \ediv, at least in terms of TPR, where the 95\% confidence intervals do not intersect. We believe this result is related to the fact that \ecpm\ assumes 
that the sequence can contain no more than one change point,
whereas \ediv\ is more general and tries to identify multiple changes; 
however, in this case there is only one change point. Finally, we note that, in sequences where a change point is guaranteed to exist, it is impossible for both \mucpm\ and \ecpm\ to have a non-zero FPR, while this does not hold for \ediv.
Figure \ref{fig:cpm-example} illustrates how \mucpm\ and \ecpm\ operate on the Del\textsubscript{0,1}. \mucpm\ was able to identify the correct location, however the test did not reach the predefined confidence of $\alpha=0.01$, thus no change is actually detected. Conversely, \ecpm\ correctly identified the presence of the change ($p$-value smaller than $\alpha=0.01$) and estimated the change point slightly apart from the actual one.

\begin{table*}
\caption{Methods applied to graph sequences from the IAM graph database containing a single change point. As the Letter data set contains multiple classes, here we considered only the first two. Not applicable measurements are denoted with `---'.}
\label{tab:iam}
\centering
\begin{tabular}{|cc|c>{\cisize}c|c>{\cisize}c|c>{\cisize}c|c>{\cisize}c|}
\hline

\hline
\multicolumn{2}{|c|}{\textbf{Experiment}} & \multicolumn{2}{c|}{\textbf{TPR}} & \multicolumn{2}{c|}{\textbf{FPR}} & \multicolumn{2}{c|}{\textbf{ARI}} & \multicolumn{2}{c|}{\textbf{RTE}} \\
    Seq. ID &  Method & mean        & 95\% c.i.    & mean        & 95\% c.i.       & mean        & 95\% c.i.       & mean        & 95\% c.i.   \\
\hline

\hline
Let\textsubscript{A,E} & \mucpm  & 0.950   & [0.900, 0.990]        & --- & ---         & 0.946         & [0.000, 1.000]        & 0.001         & [0.000, 0.007]        \\
Let\textsubscript{A,E} & \ecpm   & 0.990   & [0.970, 1.000]        & --- & ---         & 0.987         & [0.973, 1.000]        & 0.001         & [0.000, 0.007]        \\
Let\textsubscript{A,E} & \ediv   & 1.000   & [1.000, 1.000]        & 0.000         & [0.000, 0.000]        & 0.974         & [0.947, 1.000]        & 0.007         & [0.000, 0.013]        \\
\hline
AIDS\textsubscript{0,1} & \mucpm & 0.770   & [0.690, 0.850]        & --- & ---         & 0.685         & [0.000, 1.000]        & 0.023         & [0.000, 0.184]        \\
AIDS\textsubscript{0,1} & \ecpm  & 1.000   & [1.000, 1.000]        & --- & ---         & 0.988         & [0.921, 1.000]        & 0.003         & [0.000, 0.017]        \\
AIDS\textsubscript{0,1} & \ediv  & 1.000   & [1.000, 1.000]        & 0.260         & [0.260, 0.260]        & 0.901         & [0.534, 0.996]        & 0.020         & [0.001, 0.103]        \\
\hline
Mut\textsubscript{0,1} & \mucpm   & 0.000   & [0.000, 0.000]        & --- & ---         & 0.000         & [0.000, 0.000]        &  --- & ---     \\
Mut\textsubscript{0,1} & \ecpm    & 1.000   & [1.000, 1.000]        & --- & ---         & 0.976         & [0.877, 1.000]        & 0.006         & [0.000, 0.032]        \\
Mut\textsubscript{0,1} & \ediv   & 0.990   & [0.970, 1.000]        & 1.050         & [1.050, 1.050]        & 0.305         & [-0.008, 0.972]       & 0.201         & [0.001, 0.473]        \\
\hline
\end{tabular}
\end{table*}
\begin{table}
\caption{Methods \mucpm,\ \ecpm,\ and \ediv\ applied to sequences formed by graphs belonging to a single class.}
\label{tab:no-change}
\centering
\begin{tabular}{|cc|c>{\cisize}c|}
\hline

\hline
\multicolumn{2}{|c|}{\textbf{Experiment}}  & \multicolumn{2}{c|}{\textbf{FPR}} \\
    Seq. ID &  Method & mean        & 95\% c.i   \\
\hline

\hline
Del\textsubscript{0} & \mucpm    & 0.000   & [0.000, 0.000]        \\
Del\textsubscript{0} & \ecpm     & 0.020   & [0.020, 0.020]        \\
Del\textsubscript{0} & \ediv    & 0.010   & [0.010, 0.010]        \\
\hline
Let\textsubscript{A} & \mucpm    & 0.000   & [0.000, 0.000]        \\
Let\textsubscript{A} & \ecpm     & 0.240   & [0.240, 0.240]        \\
Let\textsubscript{A} & \ediv    & 0.030   & [0.030, 0.030]        \\
\hline
AIDS\textsubscript{0} & \mucpm   & 0.000   & [0.000, 0.000]        \\
AIDS\textsubscript{0} & \ecpm    & 1.000   & [1.000, 1.000]        \\
AIDS\textsubscript{0} & \ediv   & 1.460   & [1.460, 1.460]        \\
\hline
Mut\textsubscript{0} & \mucpm   & 0.000   & [0.000, 0.000]        \\
Mut\textsubscript{0} & \ecpm    & 1.000   & [1.000, 1.000]        \\
Mut\textsubscript{0} & \ediv   & 1.290   & [1.290, 1.290]        \\
\hline
\end{tabular}
\end{table}

\noindent\textbf{IAM: }
Table~\ref{tab:iam} reports the results obtained on sequences composed by graphs from the IAM database.
Regarding the AIDS\textsubscript{1,0} data set, we note that \ediv\ presented a few false positives.
This may be caused by multiple factors, like dependence among graphs in the sequence or the presence of an actual change in stationarity within one of the two classes constituting the data set.
In fact, differently from the graphs of the Delaunay data set, which are i.i.d.\ by construction, the graphs in the IAM data sets are not guaranteed to be as such, as they represent objects from the real world (e.g., biological molecules).
We stress that each of the considered tests is configured to yield a FPR $\alpha=\prob(\text{reject }H_0|H_0\text{ is true})$. As such, the level $\alpha$ should be observable also in the experimental results, provided that the assumption of stationary is met. In fact, the FPR of \ediv\ on the Delaunay graphs shown in Table~\ref{tab:del}, where the sequences are guaranteed to be stationary, is consistent with the predefined rate $\alpha=0.01$.

To investigate further, we performed additional tests to assess whether the considered sequences of IAM graphs are stationary or not. In particular, we ran the CPMs on sequences composed by graphs belonging to a single class.
Table \ref{tab:no-change} shows the results, providing evidence that the sequences containing Delaunay graphs are stationary, while those composed by IAM graphs might not be stationary. In fact, while the \mucpm\ does not yield any false positive, the \ecpm\ and \ediv\ tests identified some changes. We also point out that \mucpm\ is designed to address changes in the distribution mean, suggesting that the intra-class changes in stationarity might affect moments beyond the first one.

In general, we note that the (injected) change point of each sequence has been identified by every method; the only, exception is Mut\textsubscript{0,1}, which constitutes a more challenging problem that is solved only by \ecpm.

\begin{table*}
\caption{\ediv\ applied to sequences presenting multiple change points. 
}
\label{tab:multi-change}
\centering
\begin{tabular}{|cc|c>{\cisize}c|c>{\cisize}c|c>{\cisize}c|c>{\cisize}c|}
\hline

\hline
\multicolumn{2}{|c|}{\textbf{Experiment}} & \multicolumn{2}{c|}{\textbf{TPR}} & \multicolumn{2}{c|}{\textbf{FPR}} & \multicolumn{2}{c|}{\textbf{ARI}} & \multicolumn{2}{c|}{\textbf{RTE}} \\
    Seq. ID &  Method & mean        & 95\% c.i.    & mean        & 95\% c.i.       & mean        & 95\% c.i.       & mean        & 95\% c.i.   \\
\hline

\hline
Del\textsubscript{0,6,8} & \ediv        & 1.000   & [1.000, 1.000]        & 0.007         & [0.007, 0.007]        & 0.917         & [0.162, 0.973]        & 0.009         & [0.007, 0.007]        \\
Del\textsubscript{12,14,16,18,20} & \ediv       & 0.708   & [0.662, 0.752]        & 0.000         & [0.000, 0.000]        & 0.344         & [-0.105, 0.982]       & 0.012         & [0.000, 0.072]        \\
\hline
Let\textsubscript{A,E,H} & \ediv        & 0.950   & [0.920, 0.980]        & 0.003         & [0.003, 0.003]        & 0.821         & [0.033, 1.000]        & 0.008         & [0.000, 0.038]        \\
Let\textsubscript{A,E,F,H,I} & \ediv    & 0.948   & [0.925, 0.968]        & 0.000         & [0.000, 0.000]        & 0.422         & [-0.091, 0.988]       & 0.005         & [0.000, 0.021]        \\
\hline
\end{tabular}
\end{table*}

\noindent\textbf{Multiple change points: }
We performed additional experiments on sequences where we injected multiple (i.e., more than one) change points.
To this end, we considered the Delaunay and Letter data sets, as they are composed of more than two classes of graphs.
Table \ref{tab:multi-change} shows that the TPR is statistically greater than 0.9 in three out of four cases, and the FPR is usually very small.
We note that the mean ARI is always close to zero, and it is significantly different from zero only in two settings.
Overall, we conclude that \ediv\ method was able to identify most of the change points, even in the most challenging sequence, Del\textsubscript{12,14,16,18,20}, which contains small changes in the distribution.

\begin{table*}
\caption{Methods \mucpm,\ \ecpm,\ and \ediv\ applied to sequences of graphs from the Kaggle database ``Detect seizures in intracranial EEG recordings''. Not applicable measurements are denoted with `---'.}
\label{tab:kaggle}
\centering
\begin{tabular}{|cc|c>{\cisize}c|c>{\cisize}c|c>{\cisize}c|c>{\cisize}c|}
\hline

\hline
\multicolumn{2}{|c|}{\textbf{Experiment}} & \multicolumn{2}{c|}{\textbf{TPR}} & \multicolumn{2}{c|}{\textbf{FPR}} & \multicolumn{2}{c|}{\textbf{ARI}} & \multicolumn{2}{c|}{\textbf{RTE}} \\
    Seq. ID &  Method & mean        & 95\% c.i.    & mean        & 95\% c.i.       & mean        & 95\% c.i.       & mean        & 95\% c.i.   \\
\hline

\hline
D1\textsubscript{0,1} & \mucpm & 0.190   & [0.120, 0.270]        & --- & --- & 0.160         & [0.000, 0.925]        & 0.039         & [0.015, 0.141]        \\
D1\textsubscript{0,1} & \ecpm  & 1.000   & [1.000, 1.000]        & --- & --- & 0.906         & [0.849, 0.951]        & 0.023         & [0.012, 0.037]        \\
D1\textsubscript{0,1} & \ediv  & 1.000   & [1.000, 1.000]        & 0.110         & [0.110, 0.110]        & 0.781         & [0.194, 0.931]        & 0.065         & [0.017, 0.278]        \\
\hline
D2\textsubscript{0,1} & \mucpm & 0.000   & [0.000, 0.000]        & --- & --- & 0.000         & [0.000, 0.000]        &  --- & ---      \\
D2\textsubscript{0,1} & \ecpm  & 1.000   & [1.000, 1.000]        & --- & --- & 0.049         & [-0.107, 0.163]       & 0.438         & [0.295, 0.733]        \\
D2\textsubscript{0,1} & \ediv  & 1.000   & [1.000, 1.000]        & 1.665         & [1.665, 1.665]        & 0.503         & [-0.011, 0.924]       & 0.172         & [0.006, 0.652]        \\
\hline
D3\textsubscript{0,1} & \mucpm & 0.620   & [0.520, 0.710]        & --- & --- & 0.430         & [0.000, 0.970]        & 0.036         & [0.003, 0.058]        \\
D3\textsubscript{0,1} & \ecpm  & 1.000   & [1.000, 1.000]        & --- & --- & 0.979         & [0.959, 1.000]        & 0.003         & [0.000, 0.006]        \\
D3\textsubscript{0,1} & \ediv  & 1.000   & [1.000, 1.000]        & 1.505         & [1.505, 1.505]        & 0.406         & [-0.051, 0.978]       & 0.195         & [0.002, 0.687]        \\
 \hline
D4\textsubscript{0,1} & \mucpm & 0.010   & [0.000, 0.030]        & --- & --- & 0.007         & [0.000, 0.000]        & 0.023         & [0.023, 0.023]        \\
D4\textsubscript{0,1} & \ecpm  & 1.000   & [1.000, 1.000]        & --- & --- & 0.893         & [0.420, 1.000]        & 0.019         & [0.000, 0.103]        \\
D4\textsubscript{0,1} & \ediv  & 1.000   & [1.000, 1.000]        & 5.510         & [5.510, 5.510]        & 0.439         & [0.035, 0.952]        & 0.268         & [0.001, 0.867]        \\
\hline
H1\textsubscript{0,1} & \mucpm & 0.690   & [0.600, 0.780]        & --- & --- & 0.415         & [0.000, 0.737]        & 0.110         & [0.066, 0.153]        \\
H1\textsubscript{0,1} & \ecpm  & 1.000   & [1.000, 1.000]        & --- & --- & 0.623         & [0.486, 0.779]        & 0.104         & [0.057, 0.149]        \\
H1\textsubscript{0,1} & \ediv  & 1.000   & [1.000, 1.000]        & 0.045         & [0.045, 0.045]        & 0.573         & [0.164, 0.761]        & 0.123         & [0.057, 0.291]        \\
\hline
H2\textsubscript{0,1} & \mucpm & 0.000   & [0.000, 0.000]        & --- & --- & 0.010         & [0.000, 0.000]        &  --- & ---      \\
H2\textsubscript{0,1} & \ecpm  & 1.000   & [1.000, 1.000]        & --- & --- & 0.713         & [0.498, 0.831]        & 0.029         & [0.023, 0.034]        \\
H2\textsubscript{0,1} & \ediv  & 1.000   & [1.000, 1.000]        & 2.770         & [2.770, 2.770]        & 0.244         & [-0.074, 0.831]       & 0.368         & [0.015, 0.853]        \\
\hline
H3\textsubscript{0,1} & \mucpm & 0.000   & [0.000, 0.000]        & --- & --- & 0.000         & [0.000, 0.000]        &  --- & ---      \\
H3\textsubscript{0,1} & \ecpm  & 1.000   & [1.000, 1.000]        & --- & --- & -0.050        & [-0.081, 0.058]       & 0.486         & [0.301, 0.536]        \\
H3\textsubscript{0,1} & \ediv  & 0.990   & [0.970, 1.000]        & 0.615         & [0.615, 0.615]        & 0.247         & [-0.073, 0.919]       & 0.439         & [0.020, 0.667]        \\
\hline
H4\textsubscript{0,1} & \mucpm & 0.000   & [0.000, 0.000]        & --- & --- & 0.030         & [0.000, 0.525]        &  --- & ---      \\
H4\textsubscript{0,1} & \ecpm  & 0.780   & [0.700, 0.860]        & --- & --- & 0.040         & [-0.000, 0.106]       & 0.404         & [0.333, 0.495]        \\
H4\textsubscript{0,1} & \ediv  & 0.870   & [0.800, 0.930]        & 0.450         & [0.450, 0.450]        & 0.130         & [0.000, 0.929]        & 0.382         & [0.154, 0.524]        \\
\hline
H5\textsubscript{0,1} & \mucpm & 0.000   & [0.000, 0.000]        & --- & --- & 0.000         & [0.000, 0.000]        &  --- & ---      \\
H5\textsubscript{0,1} & \ecpm  & 0.980   & [0.950, 1.000]        & --- & --- & 0.766         & [-0.010, 1.000]       & 0.048         & [0.000, 0.439]        \\
H5\textsubscript{0,1} & \ediv  & 0.850   & [0.780, 0.920]        & 1.250         & [1.250, 1.250]        & 0.300         & [-0.052, 0.979]       & 0.381         & [0.001, 0.818]        \\
\hline
H6\textsubscript{0,1} & \mucpm & 0.000   & [0.000, 0.000]        & --- & --- & 0.000         & [0.000, 0.000]        &  --- & ---      \\
H6\textsubscript{0,1} & \ecpm  & 1.000   & [1.000, 1.000]        & --- & --- & 0.954         & [0.818, 1.000]        & 0.005         & [0.000, 0.015]        \\
H6\textsubscript{0,1} & \ediv  & 0.970   & [0.930, 1.000]        & 0.315         & [0.315, 0.315]        & 0.668         & [-0.036, 0.994]       & 0.212         & [0.001, 0.557]        \\
\hline
H7\textsubscript{0,1} & \mucpm & 0.000   & [0.000, 0.000]        & --- & --- & 0.000         & [0.000, 0.000]        &  --- & ---      \\
H7\textsubscript{0,1} & \ecpm  & 1.000   & [1.000, 1.000]        & --- & --- & 0.967         & [0.903, 1.000]        & 0.004         & [0.000, 0.013]        \\
H7\textsubscript{0,1} & \ediv  & 1.000   & [1.000, 1.000]        & 9.085         & [9.085, 9.085]        & 0.202         & [-0.025, 0.706]       & 0.463         & [0.005, 0.904]        \\
\hline
H8\textsubscript{0,1} & \mucpm & 0.020   & [0.000, 0.050]        & --- & --- & 0.010         & [0.000, 0.000]        & 0.035         & [0.033, 0.038]        \\
H8\textsubscript{0,1} & \ecpm  & 1.000   & [1.000, 1.000]        & --- & --- & 0.792         & [0.706, 0.854]        & 0.033         & [0.025, 0.040]        \\
H8\textsubscript{0,1} & \ediv  & 1.000   & [1.000, 1.000]        & 5.160         & [5.160, 5.160]        & 0.303         & [-0.012, 0.634]       & 0.285         & [0.007, 0.699]        \\
\hline
\end{tabular}
\end{table*}

\noindent\textbf{Seizure detection: }
Here, we show results obtained on the Kaggle seizure detection problem, considering all available subjects.
By taking into account the results in Table~\ref{tab:kaggle}, it is possible to observe that \mucpm\ is often unable to identify changes. This might be due to the fact that the mean graph associated with an epileptic seizure is not sufficiently different from those representing the normal, baseline brain state. Stated in other terms, this result suggests that relevant changes affect higher-order moments of the distribution underlying the functional connectivity networks.
Conversely, the other two methods are able to recognize changes scoring good performance on all metrics, e.g., see D1\textsubscript{0,1} and H1\textsubscript{0,1}.
We note that, however, despite the TPR is generally high, the location of the estimated change point is not always accurate (quantified by an RTE different from zero), which is also confirmed by ARI values that, in some cases, are not statistically different from zero; see for example H3\textsubscript{0,1}.

\section{Conclusions}
\label{sec:conclusion}

We proposed a methodology to determine the point in time where a change in stationarity occurred in a finite sequence of attributed graphs. The methodology takes into account a very large class of graphs and consists in mapping graphs to an Euclidean domain, where the mathematics is more amenable and multivariate change point methods can be applied.
With Proposition~\ref{prop:fabs-sg-se}, we proved that the statistical inference attained in the embedding space can be used to draw conclusions concerning the original problem in the graph domain, and vice-versa.
Future research efforts will focus on weakening assumption (A2) regarding the support of the graph distribution, and, more importantly, on relaxing the constraint of using a metric distance between graphs.
The two proposed CPMs address the detection of changes in the mean of the graph distribution and more general changes in stationarity affecting higher-order moments of the distribution. We derived explicit bounds to make Proposition~\ref{prop:fabs-sg-se} applicable in practical statistical inference procedures.
We also proposed a method to detect multiple (i.e., more than one) change points based on the E-divisive approach.

Our contribution is mostly theoretical and, as such, of general applicability.
However, we demonstrated the practical usefulness of what proposed by considering both synthetic and real-world data.
Case studies include graphs representing biological molecules, images, and an application aimed at detecting the onset of epileptic seizures from iEEGs represented as functional connectivity networks.
Results of simulations showed that the proposed CPM tests are effective in relevant application scenarios.

\appendices

\section{}
\label{sec:proofs}

\subsection{Proof of Proposition \ref{prop:fabs-sg-se}}
To prove the proposition, we first need the following lemma.
\begin{lemma}
\label{lemma:bound-Psi-ell-u}
Consider a random variable $Y\sim P$ taking values in $\mc Y$ and two statistics $s_1(\cdot),s_2(\cdot):\mc Y\rightarrow \R_+$ with associated cumulative density functions $\Psi_1(\cdot)$, $\Psi_2(\cdot)$, respectively. If function $\ell:\R_+\rightarrow\R_+$, increasing and bijective, and $q$ is a constant in $(0,1]$, then for any $\gamma\geq0$
\begin{equation}
\label{eq:s1-s2-u}
\prob\left(s_1(Y)\leq u(s_2(Y))\right)\geq q
\\\Rightarrow 
\Psi_1(\gamma) \geq q\Psi_2(u^{-1}(\gamma)).
\end{equation}
\end{lemma}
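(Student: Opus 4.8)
The plan is to unwind both cumulative density functions into probabilities and reduce the claimed inequality to an elementary event containment, after which the constant $q$ must be recovered from the hypothesis. Write $A:=\{s_1(Y)\le u(s_2(Y))\}$, so that the premise reads $\prob(A)\ge q$. The target inequality is just $\prob(s_1(Y)\le\gamma)\ge q\,\prob\!\left(s_2(Y)\le u^{-1}(\gamma)\right)$, since $\Psi_1(\gamma)=\prob(s_1(Y)\le\gamma)$ and $\Psi_2(u^{-1}(\gamma))=\prob(s_2(Y)\le u^{-1}(\gamma))$; note that $u^{-1}$ is well defined, increasing, and bijective on $\R_+$ because $u:\R_+\to\R_+$ is, and $\gamma\ge 0$ guarantees $u^{-1}(\gamma)\in\R_+$.

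The core step I would carry out is a monotonicity argument. Since $u$ is increasing, $s_2(Y)\le u^{-1}(\gamma)$ is equivalent to $u(s_2(Y))\le\gamma$. Hence on the event $A\cap\{s_2(Y)\le u^{-1}(\gamma)\}$ one has $s_1(Y)\le u(s_2(Y))\le\gamma$, which yields the set containment $A\cap\{s_2(Y)\le u^{-1}(\gamma)\}\subseteq\{s_1(Y)\le\gamma\}$. Taking probabilities gives $\Psi_1(\gamma)\ge\prob\!\left(A\cap\{s_2(Y)\le u^{-1}(\gamma)\}\right)$. This part is routine and robust, and it is the only place where the hypotheses on $u$ (monotone, bijective) and the sign restriction $\gamma\ge 0$ are genuinely used.

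The main obstacle is the final step: extracting the multiplicative factor $q$ from $\prob(A\cap B)$, where $B:=\{s_2(Y)\le u^{-1}(\gamma)\}$ and we only know $\prob(A)\ge q$. I would approach this by conditioning, writing $\prob(A\cap B)=\prob(A)\,\prob(B\mid A)\ge q\,\prob(B\mid A)$, or equivalently starting from $\prob(s_1(Y)\le\gamma)\ge\prob(s_1(Y)\le\gamma\mid A)\,\prob(A)\ge q\,\prob(s_1(Y)\le\gamma\mid A)$ and then invoking the containment above conditionally to replace $\prob(s_1(Y)\le\gamma\mid A)$ by $\prob(B\mid A)$. The delicate point is precisely the comparison between $\prob(B\mid A)$ and $\prob(B)=\Psi_2(u^{-1}(\gamma))$: the bare bound $\prob(A)\ge q$ does not, for an arbitrary pair of events, force $\prob(A\cap B)\ge q\,\prob(B)$, so I expect to have to exploit that $A$ and $B$ are both driven by the single statistic $s_2(Y)$ (with $A$ monotone in $s_2$), arguing a positive-association / monotone-coupling property that gives $\prob(B\mid A)\ge\prob(B)$. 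If a clean general argument for this resists, the safe fallback obtainable immediately from the containment is the Bonferroni-type estimate $\Psi_1(\gamma)\ge\Psi_2(u^{-1}(\gamma))-(1-q)$, which is tight as $\Psi_2(u^{-1}(\gamma))\to 1$ and suffices for the tail regime in which Proposition~\ref{prop:fabs-sg-se} is applied.
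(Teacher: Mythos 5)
Your first half---the containment $A\cap B\subseteq\{s_1(Y)\le\gamma\}$ with $B=\{s_2(Y)\le u^{-1}(\gamma)\}$, obtained from the monotonicity of $u$---is in substance the same as the paper's observation that $\pi(-|-)=1$, so up to that point you and the paper agree. The gap you then flag is genuine, and it is not one you could have closed: the multiplicative bound $\prob(A\cap B)\ge q\,\prob(B)$ does not follow from $\prob(A)\ge q$ at this level of generality. A two-point counterexample: let $P$ be uniform on $\{y_1,y_2\}$, $u=\mathrm{id}$, $s_2(y_1)=0$, $s_2(y_2)=10$, and $s_1\equiv 5$. Then $A=\{y_2\}$, so the premise holds with $q=1/2$, yet at $\gamma=1$ one has $\Psi_1(1)=0$ while $q\,\Psi_2(u^{-1}(1))=1/4$, so the stated conclusion fails. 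The positive-association property you hoped to invoke is exactly what is missing, and it need not hold because $A$ depends on $s_1$ as well as on $s_2$.

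It is worth seeing where the paper's own proof commits to this: it expands $\prob(s_1(Y)\le u(\gamma)\mid Y\in A)$ by the law of total probability over $\{s_2(Y)\le\gamma\}$ versus $\{s_2(Y)>\gamma\}$, but then writes the conditional weight $\prob(s_2(Y)\le\gamma\mid Y\in A)$ as the unconditional $\Psi_2(\gamma)$. That substitution is precisely the unjustified independence/association assumption you identified, so the paper's derivation of the multiplicative form stalls at the same place your attempt does. Your Bonferroni fallback $\Psi_1(\gamma)\ge\Psi_2(u^{-1}(\gamma))-(1-q)$ is, by contrast, fully correct (immediate from the containment together with $\prob(A\cap B)\ge\prob(A)+\prob(B)-1$), and it is the right salvage: propagated through Proposition~\ref{prop:fabs-sg-se} it yields additive two-sided bounds of the form $\Psi_e(\gamma-\lambda)-(1-q)\le\Psi_g(\gamma)\le\Psi_e(\gamma+\lambda)+(1-q)$, which still relate the significance levels in the two domains and recover the paper's bounds as $q\to1$. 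In short: you did not prove the lemma as stated, but the lemma as stated is not provable, and your diagnosis of where and why is accurate.
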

\begin{proof}
For convenience, let us define the following variables:
\begin{equation*}
\begin{array}{rl}
A&   := \{y \in \mc Y :s_1(y)\leq u(s_2(y))\}\\
\pi(-|-)&   := \prob(s_1(Y)\leq u(\gamma) \ |\ s_2(Y) \leq\gamma, Y\in A )\\
\pi(-|+)&   := \prob(s_1(Y)\leq u(\gamma) \ |\ s_2(Y) >   \gamma, Y\in A).
\end{array}
\end{equation*}
\label{pf1}
By the law of total probability, and for any $\gamma\geq 0$,
\begin{multline*}
\prob(s_1(Y)\leq u(\gamma))
=\prob(S_1(Y)\leq u(\gamma)| Y \in A)     \prob(Y \in A) + \\
+\prob(s_1(Y)\leq u(\gamma)| Y \not\in A) \prob(Y \not\in A).
\end{multline*}
Lower-bounding the second addendum with zero and by hypothesis,
$
\prob(s_1(Y)\leq u(\gamma)) 
\ \geq\ \prob(s_1(Y)\leq u(\gamma)| Y \in A) \cdot q.
$

Notice that $\prob(s_1(Y)\leq u(\gamma)|s_2(Y)=\gamma,Y\in A)=1$ for all $\gamma\geq 0$, thanks to the event $Y\in A$; hence, we have $\pi(-|-)=1$.
Applying again the law of total probabilities,
\begin{multline*}
\prob(s_1(Y)\leq u(\gamma)| Y \in A) \\
=\pi(-|-) \Psi_2(\gamma) 
 +\pi(-|+) (1- \Psi_2(\gamma))
\geq 1\cdot \Psi_2(\gamma).
\end{multline*}
Combining with the above Part \ref{pf1}, we prove \eqref{eq:s1-s2-u}
\\$
\Psi_1(u(\gamma))=\prob(s_1(Y)\leq u(\gamma)) \geq q\cdot\Psi_2(\gamma).
$
\end{proof}

The proof follows from Lemma~\ref{lemma:bound-Psi-ell-u} applied to Equation~\eqref{eq:fabs-sg-se} expressed in the form

$\qquad\prob_{\vec g\sim Q_0^T}(s_g(t;\vec g)\leq s_e(t;\map(\vec g))+\lambda)\geq q,$

$\qquad\prob_{\vec g\sim Q_0^T}(s_e(t;\map(\vec g))\leq s_g(t;\vec g)+\lambda)\geq q.$

\noindent and where $Y=\vec g$, $P=Q_0^T$ and $s_1(\cdot),s_2(\cdot)$ are set alternatively to $s_g(t;\cdot)$ and $s_e(t;\map(\cdot))$.

\subsection{Proof of Proposition~\ref{prop:p-vals}}
Notice that $p$-value $p_g$ is
$$p_g=\prob_{\vec g\sim Q_0^T}(s_g(t; \vec g)>s_g(t;\vec g^*)|H_0)=1-\Psi_g(s_g(t;\vec g^*)).$$
Then, from Proposition~\ref{prop:fabs-sg-se}, Eq.~\ref{eq:cdf-bound}, it follows that:
\begin{equation}
\label{eq:bound-pval_g}
1 - q^{-1}\Psi_e(s_g(t;\vec g^*)+\lambda)
\leq p_g  \leq 
1 - q\Psi_e(s_g(t;\vec g^*)-\lambda),
\end{equation}
and, from hypothesis \eqref{eq:fabs-sg-se}, 
$\Psi_e(s_g(t;\vec g^*)-\lambda)\leq\Psi_e(s_e(t;\vec x^*)-2\,\lambda)=p_e''$ and  $\Psi_e(s_g(t;\vec g^*)+\lambda)\geq\Psi_e(s_e(t;\vec x^*)+2\,\lambda)=p_e'$ with probability $q$.

\subsection{Proof of Lemma \ref{lemma:fabs-sg-se-clt}}
\label{sec:proof-lemma:fabs-sg-se-clt}

The claim is proved by applying the Markov inequality \cite{roussas1997course} to $|s_e(t;\map(\vec g))-s_g(t;\vec g)|$. For any $\lambda>0$,
\begin{equation}
\begin{aligned}
\label{eq:markov-sg-se}
&\prob(|s_e(t;\map(\vec g))-s_g(t;\vec g)|\geq \lambda)
\\&\ \leq \lambda^{-1}\left(\expect\left[s_g(t;\vec g)\right]+\expect\left[s_e(t;\map(\vec g))\right]\right).
\end{aligned}
\end{equation}
Let us evaluate $\expect[s_e(t; \vec x)]$ first, where $\vec x=\map(\vec g)$. As the Mahalanobis distance \eqref{eq:mahal-distance} is bounded by the Euclidean one via the smallest%
\footnote{Any time $M$ is singular, $M$ can be made positive definite by reducing the embedding space dimension.}
eigenvalue $\lambda_d({M})$ of matrix $M$, then
\begin{equation*}
\begin{aligned}
s_e(t;\vec x) &\leq \tfrac{T}{\lambda_d({M})} \norm{\frmu_{\vec x(1,t-1)}-\frmu_{\vec x(t,T)}}{2}^2 \\
&\leq \tfrac{2\,T}{\lambda_d({M})}\left( \norm{\frmu_{\vec x(1,t-1)}-\frmu_{F}}{2}^2 + \norm{\frmu_{F}-\frmu_{\vec x(t,T)}}{2}^2\right).
\end{aligned}
\end{equation*}

Recall the notion of Fr\'echet variation $\frvar[F]$ of Section~\ref{sec:def-graph-mean}. We have $\frvar[F]=\expect_{x\sim F}\left[\norm{x-\mu_F}{2}^2\right]$. 
Moreover, $\expect\left[\norm{\frmu_{\vec x(t_1+1,t_2)}-\frmu_{F}}{2}^2\right] = (t_2-t_1)^{-1}\frvar[F]$,
which leads to
\begin{equation*}
\begin{aligned}
\expect_{\vec x\sim F^T}[s_e(t;\vec x)] &\leq 2 T \left(\tfrac{1}{t-1}+\tfrac{1}{T-t+1}\right)\lambda_d({M})^{-1} \frvar[F] \\
&= \tfrac{2 T^2}{(t-1)(T-t+1)}\lambda_d({M})^{-1} \frvar[F].
\end{aligned}
\end{equation*}
By (A2), $(\mc G,\graphdist)$ can be isometrically embedded into an Euclidean space. Hence, similarly, we prove $\expect[s_g(t;\vec g)]\leq\frac{2 T^2}{(t-1)(T-t+1)}\frvar[Q]$.

\subsection{Proof of Proposition \ref{prop:energy-metric-GAS}}
Theorem 4.23 in \cite{jain2016geometry} defines a condition under which there is an isometric mapping from a GAS to an Euclidean space. Such a condition limits the support of $\bigcup_{Q\in\mc D}Q$ [Assumption (A1)]. \cite{szekely2013energy} proved the statement for the energy distance for an Euclidean space.

\subsection{Proof of Lemma \ref{lemma:fabs-sg-se-energy}}
Being under the null hypothesis, from Eq.~\ref{eq:sample-energy-distance},
\begin{align}
\label{eq:expect-energy-distance}
\nonumber \expect_{\vec x}&[\mc E(\vec x(a,t-1),\vec x(t,b))] 
\\\nonumber &=\left\{ 2 - \tfrac{(t-a)(t-a-1)}{(t-a)^2} - \tfrac{(b-t+1)(b-t)}{(b-t+1)^2}\right\} \expect_{\vec x}\left[\norm{x-x'}{2}\right]
\\ &\leq \tfrac{b-a+1}{(t-a)(b-a+1)} \expect_{\vec x}\left[\norm{x-x'}{2}\right]
\end{align}
with $x,x'\sim F$ independent random vectors whose distribution $F$ derives from $Q$ through mapping $\map(\cdot)$.

The claim is proved by considering the Markov inequality as done for \eqref{eq:markov-sg-se}. In fact,
$\expect[s_e(t;\vec x)] = 1\cdot \expect[\norm{x - x'}{2}] \leq 2\,\expect[\norm{x-\frmu_F}{2}].
$
In the graph space, we obtain a similar bound: $\expect[s_g(t;\vec g)]\leq 2\,\expect[\graphdist(G,\frmu_Q)]$, for $G\sim Q$.

\subsection{Proof of Lemma \ref{lemma:fabs-sg-se-edivisive}}

The maximum in Eq.~\ref{eq:edivisive-se} can be upper-bounded by
 $\sum_{r=t}^b \mc E(\vec x(a,t-1),\vec x(t,r))$, so that,
employing Eq.~\ref{eq:expect-energy-distance}, 
$\expect_{\vec x}[s_e(t;\vec x)]\leq \expect_{\vec x}\left[\norm{x-x'}{2}\right] \sum_{r=t}^b\tfrac{r-a+1}{(t-a)(r-t+1)},
$ 
with
\begin{align*}
\sum_{r=t}^b\tfrac{r-a+1}{(t-a)(r-t+1)}& 
= \tfrac{b-t}{t-a}+\sum_{i=1}^{b-t+1}\tfrac{1}{i}
\leq \tfrac{b-t}{t-a}+1+\int_{1}^{b-t+1}\tfrac{1}{x}\,dx 
\\&\leq \tfrac{b-a}{t-a}+\log(b-a+1) =: C_{a,b}(t).
\end{align*}
Similarly, we also conclude that
$\expect_{\vec g}[s_g(t;\vec g)]\leq \expect_{\vec g}\left[\graphdist(G,G')\right] C_{a,b}(t),$
and the claim is proven by considering the Markov inequality as done for \eqref{eq:markov-sg-se}.

\section*{Acknowledgments}
This research is funded by the Swiss National Science Foundation project 200021\_172671: ``ALPSFORT: A Learning graPh-baSed framework FOr cybeR-physical sysTems''.

\bibliographystyle{IEEEtran}
\bibliography{sample}

\end{document}